\pgfplotsset{compat=1.12}
\newlist{compactitem}{itemize}{3}
\setlist[compactitem]{align=right,labelindent=0.2em,labelwidth=0.8em,labelsep*=0.6em,leftmargin=!,topsep=-3pt,partopsep=0pt,itemsep=2pt,parsep=0pt}
\setlist[compactitem,1]{label=\raisebox{0.0em}{\scalebox{1.0}{$\bullet$}}}
\setlist[compactitem,2]{label=\raisebox{0.1em}{\scalebox{0.5}{$\blacksquare$}}}
\setlist[compactitem,3]{label=\raisebox{0.1em}{\scalebox{0.7}{$\blacktriangleright$}}}
\setlist[compactitem,4]{label=\raisebox{0.1em}{\scalebox{0.6}{$\blacklozenge$}}}
\newlist{compactdesc}{description}{3}
\setlist[compactdesc]{topsep=-3pt,partopsep=0pt,itemsep=2pt,parsep=0pt}
\newlist{compactenum}{enumerate}{3}
\setlist[compactenum]{topsep=-3pt,partopsep=0pt,itemsep=2pt,parsep=0pt}
\setlist[compactenum,1]{label=\arabic*}
\setlist[compactenum,2]{label=\alph*}
\setlist[compactenum,3]{label=\roman*}
\renewcommand\thmcontinues[1]{\emph{Continued}}
\theoremstyle{plain}
\newtheorem{theorem}{Theorem}
\newtheorem{lemma}{Lemma}
\newtheorem{corollary}{Corollary}
\theoremstyle{definition}
\newtheorem{definition}{Definition}
\newtheorem{remark}{Remark}
\newenvironment{example}
{\pushQED{\qed}\examplex}
{\popQED\endexamplex}
\newtheorem*{example*}{Example}
\newtheorem{proposition}{Proposition}
\title{Beyond Structural Causal Models: Causal Constraints Models}
\author{ \bf{Tineke Blom} \\ Informatics Institute \\ University of Amsterdam \\ The Netherlands \And  {\bf Stephan Bongers} \\ Informatics Institute \\ University of Amsterdam \\ The Netherlands \And {\bf Joris M. Mooij} \\ Informatics Institute \\ University of Amsterdam \\ The Netherlands}
\begin{document}
\maketitle
\vspace*{-10pt}

\begin{abstract}
Structural Causal Models (SCMs) provide a popular causal modeling framework. In this work, we show that SCMs are not flexible enough to give a complete causal representation of dynamical systems at equilibrium. Instead, we propose a generalization of the notion of an SCM, that we call Causal Constraints Model (CCM), and prove that CCMs do capture the causal semantics of such systems. We show how CCMs can be constructed from differential equations and initial conditions and we illustrate our ideas further on a simple but ubiquitous (bio)chemical reaction. Our framework also allows to model functional laws, such as the ideal gas law, in a sensible and intuitive way.
\end{abstract}

\section{INTRODUCTION}
\pagenumbering{arabic}
Real-world processes are often complex and time-evolving. The dynamics of such systems can be modeled by (random) differential equations, which offer a fine-grained description of how the variables in the system change over time. A coarser but more tractable approach are Structural Causal Models (SCMs), which provide a modeling framework that is used in many fields such as biology, the social sciences, and economy \citep{Pearl2000}. Although SCMs have been succesfully applied to certain static systems, a pressing concern is whether SCMs are able to completely model the causal semantics of the stationary behavior of a dynamical system. In this work, we prove that generally SCMs are not flexible enough to completely model dynamical systems at equilibrium. 

We generalize the notion of SCMs and introduce a novel type of causal model, that we call Causal Constraints Models (CCMs). We prove that they give a complete description of the causal semantics of dynamical systems at equilibrium and show how a CCM can be derived from differential equations and initial conditions. We further motivate our approach by pointing out that CCMs, contrary to SCMs, correctly describe the causal semantics of functional laws (e.g. the ideal gas law), which describe relations between variables that are invariant under all interventions. We illustrate the benefits of CCMs on a simple but ubiquitous (bio)chemical reaction.

Causal models that arise from studying the behavior of dynamical systems have received much attention over the years. \citet{Fisher1970, Voortman,Sokol2014,Rubenstein2018, Mogensen2018} consider causal relations in systems that can be modelled by (stochastic) differential equations that are not in equilibrium. In contrast, we consider the stationary behaviour of dynamical systems, which does not require us to model the system's dependence on time. \citet{Mooij2013,Hyttinen2001,Lacerda2008,Mooij2012,Bongers2018} show how cyclic SCMs may arise from studying the stationary behavior of certain dynamical time-series or differential equations, and how in some cases cyclic SCMs can be learned from equilibrium data. SCMs are well-understood and have recently been extended to also include the cyclic case \citep{Forre2017, Bongers2016}. The drawback of the extension in \citet{Forre2017}, with respect to modeling equilibria of dynamical systems, is that it requires the model to have a globally compatible solution under any intervention, which dynamical systems do not, in general, possess. Another modeling approach for dynamical systems at equilibrium is to construct a, possibly cyclic, SCM from the differential equations as \citet{Mooij2013} and \citet{Bongers2018} do. In this work, we show that these approaches to model the causal semantics of the stationary behavior in dynamical systems cannot accomodate the dependence of the equilibria on the initial conditions of the system.

In previous work, researchers have come across subtleties regarding the relation between the causal semantics and conditional independence properties of dynamical systems at equilibrium \citep{Iwasaki1994, Dash, Lacerda2008}. Previously, researchers have made additional assumptions about the underlying dynamical system to circumvent these. Although \citetalias{Rubenstein2017} and \citet{Bongers2018} do not make such restrictions, the price that one pays is that either one must limit the interventions that can be modeled or the equilibrium is no longer uniquely specified and one is limited to modeling the fixed points of the system. To the best of our knowledge, Causal Constraints Models are the first models that can completely capture the causal semantics of the stationary behavior of dynamical systems in general.

A disadvantage of CCMs over SCMs is that they do not yet possess the intuitive graphical interpretation that SCMs have. We consider representations of the independence structure of CCMs outside the scope of this work.

\subsection{STRUCTURAL CAUSAL MODELS}
A statistical model over random variables, taking value in a measurable space $\boldsymbol{\mathcal{X}}$, usually is a pair $(\boldsymbol{\mathcal{X}}, \boldsymbol{\mathbb{P}}^{\boldsymbol{\mathcal{X}}})$ where $\boldsymbol{\mathbb{P}}^{\boldsymbol{\mathcal{X}}}$ is a (parametrized) family of probability distributions on $\boldsymbol{\mathcal{X}}$. A causal model on the other hand, can be thought of as a family of statistical models, one for each (perfect) intervention,
\begin{equation}
\bar{\mathbb{P}}^{\boldsymbol{\mathcal{X}}} =
  \left(\mathbb{P}^{\boldsymbol{\mathcal{X}}}_{\text{do}(I, \boldsymbol{\xi}_I)}: \quad I\in\mathcal{P}(\mathcal{I}), \quad \boldsymbol{\xi}_I \in \boldsymbol{\mathcal{X}}_I\right),
\end{equation}
where $\mathcal{I}$ is an index set and $\mathcal{P}(\mathcal{I})$ denotes the power set of $\mathcal{I}$ (i.e.\ the set of all subsets of $\mathcal{I}$). $I$ represents the intervention target and $\boldsymbol{\xi}_I$ a tuple of intervention values. The null intervention $\mathrm{do}(\emptyset)$ for $I=\emptyset$ corresponds to the observed system.

SCMs are a special type of causal models that are specified by structural equations. Our formal treatment of SCMs mostly follows \citet{Bongers2016, Pearl2000}. For the purposes of this paper, we deviate from the usual definition of SCMs by not assuming independence of exogenous variables and by not requiring acyclicity (i.e. recursiveness).

\begin{definition}
\label{def:SCM}
Let $\mathcal{I}$ and $\mathcal{J}$ be index sets. A \emph{Structural Causal Model (SCM)} $\mathcal{M}$ is a triple $(\boldsymbol{\mathcal{X}}, F, \boldsymbol{E})$, with:
\begin{compactitem}
\item a product of standard measurable spaces $\boldsymbol{\mathcal{X}}=\prod_{i\in\mathcal{I}}\mathcal{X}_i$ (domains of endogenous variables),
\item a tuple of exogenous random variables $\boldsymbol{E}=(E_j)_{j\in\mathcal{J}}$ taking value in a product of standard measurable spaces $\boldsymbol{\mathcal{E}}=\prod_{j\in\mathcal{J}}\mathcal{E}_j$,
\item a family $F$ of measurable functions:\footnote{$\text{pa}(i)\subseteq \mathcal{I} \cup \mathcal{J}$ denotes a subset of indexes that are sufficient to determine the values of $f_i$.}
\begin{equation*}
f_i: \boldsymbol{\mathcal{X}}_{\text{pa}(i)\cap\mathcal{I}}\times \boldsymbol{\mathcal{E}}_{\text{pa}(i)\cap\mathcal{J}} \to \mathcal{X}_i,
  \quad \forall i \in \mathcal{I}.
\end{equation*}
\end{compactitem}
\end{definition}

Note that a cyclic structural causal model does not need to imply a unique joint distribution $\mathbb{P}^{\boldsymbol{\mathcal{X}}}_{\text{do}(\emptyset)}$ on the space of endogenous variables in the observed system, although acyclic SCMs do \citep{Bongers2016}. When there exists a unique solution $\boldsymbol{x}\in\boldsymbol{\mathcal{X}}$ to the \emph{structural equations} 
\begin{align*}
x_i = f_{i}(\boldsymbol{x}_{\text{pa}(i)\cap\mathcal{I}}, \boldsymbol{e}_{\text{pa}(i)\cap\mathcal{J}}),
  \quad \forall i\in \mathcal{I}
\end{align*}
for almost all $\boldsymbol{e}\in\boldsymbol{\mathcal{E}}$, we say that the model is uniquely solvable. 

\begin{definition}
\label{def:solution of SCM}
We say that a random variable $\boldsymbol{X}=(X_i)_{i\in\mathcal{I}}$ is a \emph{solution} to an SCM $\mathcal{M}=(\boldsymbol{\mathcal{X}}, F, \boldsymbol{E})$ if
\begin{equation*}
X_i = f_{i}(\boldsymbol{X}_{\text{pa}(i)\cap\mathcal{I}}, \boldsymbol{E}_{\text{pa}(i)\cap\mathcal{J}})
  \quad \text{a.s.},
  \quad \forall i\in \mathcal{I}.
\end{equation*}
\end{definition}
An SCM may have a unique (up to zero sets) solution, multiple solutions, or there may not exist any solution at all.

There are many types of interventions, corresponding to different experimental procedures, that can be modeled in an SCM. For the remainder of this work, we consider \emph{perfect} (also known as ``surgical'' or ``atomic'') interventions that force variables to take on a specific value through some external force acting on the system.

\begin{definition}
A \emph{perfect intervention} $\text{do}(I,\boldsymbol{\xi}_I)$ with target $I\subseteq\mathcal{I}$ and value $\boldsymbol{\xi}_I\in\boldsymbol{\mathcal{X}}_I$ on an SCM $\mathcal{M}=(\boldsymbol{\mathcal{X}}, F, \boldsymbol{E})$ maps it to the \emph{intervened} SCM $\mathcal{M}_{\text{do}(I,\boldsymbol{\xi}_I)}=(\boldsymbol{\mathcal{X}}, \widetilde{F},\boldsymbol{E})$ with $\widetilde{F}$ the family of measurable functions:
\begin{multline*}
\tilde{f}_i(\boldsymbol{x}_{\text{pa}(i)\cap\mathcal{I}}, \boldsymbol{e}_{\text{pa}(i)\cap\mathcal{J}})\\
 = \begin{cases}
  \xi_i & i\in I,\\
  f_i(\boldsymbol{x}_{\text{pa}(i)\cap\mathcal{I}}, \boldsymbol{e}_{\text{pa}(i)\cap\mathcal{J}}) & i\in\mathcal{I}\backslash I.
\end{cases}
\end{multline*}
\end{definition}

Note that the solvability of an SCM may change after a perfect intervention, e.g.\ a uniquely solvable SCM may no longer be so after certain interventions.

\subsection{DYNAMICAL SYSTEMS}
\label{sec:preliminaries:dynamical systems}
We consider dynamical systems $\mathcal{D}$ describing $p = |\mathcal{I}|$ (random) variables $\boldsymbol{X}(t)$ taking value in $\boldsymbol{\mathcal{X}}=\mathbb{R}^p$. They consist of a set of coupled first-order ordinary differential equations (ODEs) where the initial conditions $\boldsymbol{X}(0)$ are determined by exogenous random variables $\boldsymbol{E}=(E_i)_{i\in\mathcal{I}}$ taking value in $\boldsymbol{\mathcal{E}}=\mathbb{R}^p$. That is,
\begin{align*}
\dot{X}_i(t)
  &= f_i(\boldsymbol{X}(t)), &\forall i\in\mathcal{I},\\
X_i(0)
  &=E_i, &\forall i\in\mathcal{I},
\end{align*}
where the $f_i$ are locally Lipschitz continuous functions.\footnote{If the dynamics depends on (random) parameters, they can be modeled as additional endogenous variables with vanishing time derivatives and initial conditions corresponding to the (random) parameters. Therefore, without loss of generality, we may assume that the functions $f_i$ only depend on $\boldsymbol{X}$.} Throughout this paper, we will assume for any dynamical system we encounter that for $\mathbb{P}^{\boldsymbol{E}}$-almost every $\boldsymbol{e}\in\mathbb{R}^{p}$ the initial value problem with $\boldsymbol{X}(0) = \boldsymbol{e}$ has a unique solution $\boldsymbol{X}(t, \boldsymbol{e})$ for all $t \ge 0$, given by
\begin{equation}\label{eq:ODE_solution}
\boldsymbol{X}(t, \boldsymbol{e}) = \boldsymbol{X}(0,\boldsymbol{e}) + \int_{0}^{t} \boldsymbol{f}\big(\boldsymbol{X}(s, \boldsymbol{e})\big) ds.
\end{equation}
This solution $\boldsymbol{X}(t, \boldsymbol{e})$ can be trivially extended to $\boldsymbol{\mathcal{E}}$ and it is measurable in $\boldsymbol{e}$ for all $t$ \citep{Han2017}.

A \emph{fixed point} (or equilibrium point) of $\mathcal{D}$ is a point $\boldsymbol{x}^*\in\mathbb{R}^p$ for which $\boldsymbol{f}(\boldsymbol{x}^*)=\boldsymbol{0}$. For $\boldsymbol{e}\in\mathbb{R}^p$, the dynamical system converges to an equilibrium $\boldsymbol{X}^*(\boldsymbol{e})\in\mathbb{R}^p$ if
\begin{equation}
\label{eq:convergence ivp}
\lim_{t\to\infty} \boldsymbol{X}(t, \boldsymbol{e}) = \boldsymbol{X}^*(\boldsymbol{e}).
\end{equation}
If for $\mathbb{P}^{\boldsymbol{E}}$-almost every $\boldsymbol{e}$ the limit in equation (\ref{eq:convergence ivp}) exists, then we say that $\mathcal{D}$ converges to the equilibrium solution $\boldsymbol{X}^*=\lim_{t\to\infty} \boldsymbol{X}(t,\boldsymbol{E})$.

Interventions on dynamical systems can be modeled in different ways. One could for example fix the value of targeted values at one time-point. Alternatively, one could fix the trajectory of the targeted values as in \citet{Rubenstein2018}. Here, we follow \citet{Mooij2013} and define interventions as operations that fix the value of the targeted variables to a constant (for all time). 

\begin{definition}
A \emph{perfect intervention} $\text{do}(I,\boldsymbol{\xi}_I)$ where $I\subseteq\mathcal{I}$ and $\boldsymbol{\xi}_I\in\boldsymbol{\mathcal{X}}_I$ results in the \emph{intervened} dynamical system $\mathcal{D}_{\text{do}(I,\boldsymbol{\xi}_I)}$ specified by
\begin{align*}
\dot{X}_i(t) &= 0, && X_i(0)=\xi_i, &&\forall i\in I,\\
\dot{X}_i(t) &= f_i(\boldsymbol{X}(t)), &&X_i(0)=E_i, &&\forall i\in\mathcal{I}\backslash I.
\end{align*}
\end{definition}

We say that a causal model $\mathcal{M}$ \emph{completely} captures the causal semantics of the stationary behaviour of a dynamical system $\mathcal{D}$ if for all $I\subseteq\mathcal{I}$ and all $\boldsymbol{\xi}_I\in\boldsymbol{\mathcal{X}}_I$: the equilibrium solutions of $\mathcal{D}_{\mathrm{do}(I,\boldsymbol{\xi}_I)}$ coincide with the solutions of $\mathcal{M}_{\mathrm{do}(I,\boldsymbol{\xi}_I)}$ (up to $\mathbb{P}^{\boldsymbol{E}}$-null sets).

The construction of SCMs from dynamical systems in \citet{Mooij2013} relies on the fact that for systems that converge to a fixed point independent of initial conditions (i.e.\ globally asymptotically stable systems), the fixed point directly gives a complete description of its stationary behavior. A much weaker stability assumption is \emph{(global) semistability} \citep{Campbell1979,Bhat1999}, where solutions of a system converge to a stable equilibrium determined by initial conditions. Our definition follows \citet{Haddad2010}.

\begin{definition}
\label{def:semistability}
Let $\mathcal{D}$ be a dynamical system and $\boldsymbol{\mathcal{U}}\subseteq\mathbb{R}^p$ an invariant subset (i.e. if $\boldsymbol{x}(0)\in\boldsymbol{\mathcal{U}}$ then $\boldsymbol{x}(t)\in\boldsymbol{\mathcal{U}}$ for all $t\geq 0$).
A fixed point $\boldsymbol{x}^*\in\boldsymbol{\mathcal{U}}$ is Lyapunov stable with respect to $\boldsymbol{\mathcal{U}}$ if for all $\boldsymbol{x}(0)\in\boldsymbol{\mathcal{U}}$: for all $\epsilon>0$ there exists $\delta>0$ such that if $\lVert \boldsymbol{x}(0)-\boldsymbol{x}^* \rVert < \delta$ then  for all $t\geq 0$, $\lVert \boldsymbol{x}(t)-\boldsymbol{x}^* \rVert < \epsilon$. It is \emph{semistable} w.r.t. $\boldsymbol{\mathcal{U}}$ if, additionally, there exists a relatively open subset\footnote{$\boldsymbol{\mathcal{N}}$ is a relatively open subset of $\boldsymbol{\mathcal{U}}$ if there is an open set $\boldsymbol{\mathcal{N}}'\subseteq\mathbb{R}^p$ such that $\boldsymbol{\mathcal{N}} = \boldsymbol{\mathcal{N}}'\cap\boldsymbol{\mathcal{U}}$.} $\boldsymbol{\mathcal{N}}$ of $\boldsymbol{\mathcal{U}}$ that contains $\boldsymbol{x}^*$ such that $\boldsymbol{x}(t)$ converges to a Lyapunov stable fixed point for all $\boldsymbol{x}(0)\in\boldsymbol{\mathcal{N}}$. If $\boldsymbol{\mathcal{N}}=\boldsymbol{\mathcal{U}}$ then $\boldsymbol{x}^*$ is globally semistable w.r.t. $\boldsymbol{\mathcal{U}}$. Finally, we say that $\mathcal{D}$ is globally semistable w.r.t. $\boldsymbol{\mathcal{U}}$ if all its fixed points are globally semistable w.r.t. $\boldsymbol{\mathcal{U}}$.
\end{definition}

\begin{definition}
	\label{def:structurally semistable}
  A dynamical system $\mathcal{D}$ is \emph{structurally semistable} if for all $I\subseteq\mathcal{I}$ there exists $\boldsymbol{\mathcal{U}}\subseteq\mathbb{R}^p$ with $\mathbb{P}^{\boldsymbol{E}_{\mathcal{I}\backslash I}}( \boldsymbol{\mathcal{U}}_{\mathcal{I}\backslash I})=1$ such that: $\mathcal{D}_{\mathrm{do}(I,\boldsymbol{\xi}_I)}$ is globally semistable w.r.t. $\boldsymbol{\mathcal{U}}$ (for any $\boldsymbol{\xi}_I\in\boldsymbol{\mathcal{X}}_I$).
\end{definition}

Whether a dynamical system converges to a certain fixed point depends on initial conditions. This dependence can often be described by constants of motion, and there exists a vast literature on how and when these can be derived from differential equations. The notion of semistability is appropriate in many real-world applications in chemical kinetics, environmental, and economic systems \citep{Haddad2010}. For chemical reaction networks, there exist convenient criteria on the network structure that guarantee global semistability \citep{Chellaboina2009}, and for mechanical systems semistability characterizes the motion of rigid bodies subject to damping \citep{Bhat1999}.

\section{DYNAMICAL SYSTEMS AS SCMs}
We consider SCM representations of the equilibria in a chemical reaction and conclude that, generally, SCMs are not flexible enough to completely capture the causal semantics of stationary behaviour in dynamical systems.

\subsection{BASIC ENZYME REACTION}
The basic enzyme reaction is a well-known example of a (bio)chemical reaction network. It describes a system  where a substrate $S$ reacts with an enzyme $E$ to form a complex $C$ which is then converted into a product $P$ and the enzyme \citep{Murray2002}. In the open enzyme reaction a constant influx of substrate and an efflux of product are added \citep{Belgacem2012}. The process can be presented by the following reaction graph,
\begin{minipage}{\linewidth}
\centering
\begin{tikzpicture}[auto]
\centering
\node (S) {$S + {}$};
\node (E1) [right of=S, node distance=5mm] {$E$};
\node [below of=S, node distance=8mm] (source) {};
\node [right of=E1, node distance=12mm] (SE) {$C$};
\node [right of=SE, node distance=12mm] (P) {$P$};
\node [right of=P, node distance=5mm] (E2) {${} + E$};
\node [below of=P, node distance=8mm] (sink) {};
\draw[->] (E1) edge[bend left] node {\tiny$k_1$} (SE);
\draw[->] (SE) edge[bend left] node {\tiny$k_{-1}$} (E1);
\draw[->] (SE) to node {\tiny$k_2$} (P);
\draw[->] (P) to node {\tiny$k_3$} (sink);
\draw[->] (source) to node {\tiny$k_0$} (S);
\end{tikzpicture}
\end{minipage}
and $\boldsymbol{k}=[k_0,k_{-1},k_1,k_2,k_3]$ strictly positive parameters.

Differential equations for the concentrations of each molecule in the system can be obtained by application of the law of mass-action, which states that the rate of a reaction is proportional to the product of the concentration of the reactants \citep{Murray2002}, yielding:
\begin{align}
\label{eq:ber ode first}
\dot{S}(t) &= k_0-k_{1}S(t)E(t) + k_{-1} C(t),\\
\label{eq:ber ode second}
\dot{E}(t) &= -k_1S(t)E(t) + (k_{-1}+k_2)C(t),\\
\dot{C}(t) &= k_1S(t)E(t) - (k_{-1}+k_2)C(t),\\
\label{eq:ber ode last}
\dot{P}(t) &= k_2C(t) - k_3 P(t),\\
&(S(0),E(0),C(0),P(0)) = (s_0,e_0,c_0,p_0).
\end{align}

We simulated the system in (\ref{eq:ber ode first}) to (\ref{eq:ber ode last}) with random initial conditions and also under interventions on $S$ and $E$. Figures \ref{fig:ber observational} to \ref{fig:ber interventional E} show how the time-trajectories of the concentrations depend on initial conditions in different interventional settings.

\definecolor{mycolor1}{rgb}{0.00000,0.75000,0.75000}%
\definecolor{mycolor2}{rgb}{0.75000,0.00000,0.75000}%
\definecolor{mycolor3}{rgb}{0.75000,0.75000,0.00000}%

\begin{figure*}[t]
\centering
\begin{subfigure}[t]{0.31\textwidth}
\centering
\begin{tikzpicture}
\begin{axis}[
width=1.18\textwidth,
height=0.9\textwidth,
xlabel={\footnotesize $t$},
xmin=0,xmax=35,xtick={0,7,...,35},
xticklabels={},
xlabel shift=-5pt,
ylabel={\footnotesize $\log$ concentration},
ymin=-2,ymax=3,ytick={-2,-1,0,1,2,3},
yticklabels={},
ylabel shift=-5pt,
legend pos=north east,
legend columns=2]

\foreach \F in {1,2,...,10}{
	\addplot[mycolor1] table[x index = {0}, y index = {1}, col sep = comma] {ber_observation_data\F.dat};
	\addplot[mycolor2] table[x index = {0}, y index = {2}, col sep = comma] {ber_observation_data\F.dat};
	\addplot[blue!90!white] table[x index = {0}, y index = {3}, col sep = comma] {ber_observation_data\F.dat};
	\addplot[mycolor3] table[x index = {0}, y index = {4}, col sep = comma] {ber_observation_data\F.dat};
}

\addlegendentry{\tiny $S(t)\;\;$}
\addlegendentry{\tiny $E(t)$}
\addlegendentry{\tiny $C(t)\;\;$}
\addlegendentry{\tiny $P(t)$}
\end{axis}
\end{tikzpicture}
\caption{$S$ and $P$ converge to an equilibrium that depends on initial conditions in the observed system.}
\label{fig:ber observational}
\end{subfigure}%
\hspace{0.0349\textwidth}%
\begin{subfigure}[t]{0.31\textwidth}
\centering
\begin{tikzpicture}
\begin{axis}[
width=1.18\textwidth,
height=0.9\textwidth,
xlabel={\footnotesize $t$},
xmin=0,xmax=10,xtick={0,2,...,10},
xticklabels={},
xlabel shift=-5pt,
ylabel={\footnotesize $\log$ concentration},
ymin=-2,ymax=2,ytick={-2,-1,0,1,2},
yticklabels={},
ylabel shift=-5pt,
legend pos=south east,
legend columns=2]

\foreach \F in {1,2,...,10}{
	\addplot[mycolor1] table[x index = {0}, y index = {1}, col sep = comma] {ber_intervention_data_S\F.dat};
	\addplot[mycolor2] table[x index = {0}, y index = {2}, col sep = comma] {ber_intervention_data_S\F.dat};
	\addplot[blue!90!white] table[x index = {0}, y index = {3}, col sep = comma] {ber_intervention_data_S\F.dat};
	\addplot[mycolor3] table[x index = {0}, y index = {4}, col sep = comma] {ber_intervention_data_S\F.dat};
}

\addlegendentry{\tiny $S(t)\;\;$}
\addlegendentry{\tiny $E(t)$}
\addlegendentry{\tiny $C(t)\;\;$}
\addlegendentry{\tiny $P(t)$}
\end{axis}
\end{tikzpicture}
\caption{$C,E,$ and $P$ converge to an equilibrium that depends on the initial conditions after an intervention on $S$.}
\label{fig:ber interventional S}
\end{subfigure}%
\hspace{0.0349\textwidth}%
\begin{subfigure}[t]{0.31\textwidth}
\centering
\begin{tikzpicture}
\begin{axis}[
width=1.18\textwidth,
height=0.9\textwidth,
xlabel={\footnotesize $t$},
xmin=0,xmax=20,xtick={0,4,...,20},
xticklabels={},
xlabel shift=-5pt,
ylabel={\footnotesize $\log$ concentration},
ymin=-2,ymax=2,ytick={-2,-1,0,1,2},
yticklabels={},
ylabel shift=-5pt,
legend pos=north east,
legend columns=2]

\foreach \F in {1,2,...,10}{
	\addplot[mycolor1] table[x index = {0}, y index = {1}, col sep = comma] {ber_intervention_data_E\F.dat};
	\addplot[mycolor2] table[x index = {0}, y index = {2}, col sep = comma] {ber_intervention_data_E\F.dat};
	\addplot[blue!90!white] table[x index = {0}, y index = {3}, col sep = comma] {ber_intervention_data_E\F.dat};
	\addplot[mycolor3] table[x index = {0}, y index = {4}, col sep = comma] {ber_intervention_data_E\F.dat};
}

\addlegendentry{\tiny $S(t)\;\;$}
\addlegendentry{\tiny $E(t)$}
\addlegendentry{\tiny $C(t)\;\;$}
\addlegendentry{\tiny $P(t)$}
\end{axis}
\end{tikzpicture}
\caption{$S,C,$ and $P$ converge to an equilibrium that is independent of the initial conditions after an intervention on $E$.}
\label{fig:ber interventional E}
\end{subfigure}
\caption{Temporal dependence of concentrations in the basic enzyme reaction in  (\ref{eq:ber ode first}) to (\ref{eq:ber ode last}) with random initial conditions and $\boldsymbol{k}=[0.4,0.3,1.0,1.1,0.5]$. Other choices for the rate parameters give qualitatively similar results.}
\label{fig:xxx}
\end{figure*}
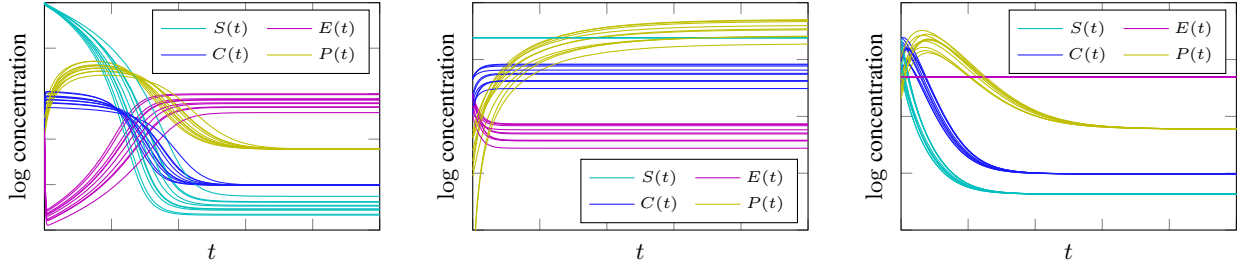

\subsubsection{EQUILIBRIUM SOLUTIONS}
\label{sec:ber equilibrium states}

By explicit calculation one can verify that given strictly positive initial conditions, the dynamical system converges to an equilibrium $(S^*,C^*,E^*,P^*)$ if it exists, for any perfect intervention (one can also check that the system is structurally semistable).\footnote{See \citet{Belgacem2012} and supplementary material for details.} The equilibria can be found by deriving constraints on solutions of the system:
\begin{compactitem}
\item At equilibrium the system is at rest and all time derivatives (in the equations of motion) must vanish. The equation of motion of each variable then results in a constraint that is invariant under all interventions that do not target that variable. For example, equation (\refeq{eq:ber ode first}) yields the equilibrium equation
\begin{equation*}
k_0-k_{1}S^*E^* + k_{-1} C^* = 0,
\end{equation*}
which constrains the equilibrium state unless $S$ is targeted by an intervention.

\item Symmetries or (linear) dependencies between the time derivatives lead to conservation laws (i.e.\ constants of motion), which are relations between variables that are time-invariant but that are typically invariant under fewer interventions than constraints of the first type. For example, since $\dot{C}(t)+\dot{E}(t)=0$ for all $t$, we have that
\begin{equation}
\label{eq:ber constant of motion}
C(t)+E(t) = c_0 + e_0, \qquad \forall\, t,
\end{equation}
unless $C$, $E$ or both $C$ and $E$ are targeted by an intervention.

\item A system may contain (derived) variables whose time-derivative does not depend on itself. Since
\begin{equation}
\dot{S}(t) - \dot{E}(t) = k_0 - k_2 C(t),
\end{equation}
the variable $C$ cannot be `freely manipulated', in the sense that $S(t)-E(t)$ does not converge to equilibrium under interventions $\text{do}(C=\xi_C)$ when $\xi_C\neq\frac{k_0}{k_2}$. For $\xi_C=\frac{k_0}{k_2}$ a new constant of motion is introduced so that $S(t)-E(t)=s_0-e_0$ unless $S$, $E$ or both $S$ and $E$ are targeted by an intervention.
\end{compactitem}

It can be shown, through explicit calculation, that for any perfect intervention these constraints have no solution when the dynamical system does not converge to an equilibrium and they have a unique solution when the system does converge to an equilibrium. A complete causal description of the system can be found in Table \ref{tab:ber complete solutions} in the supplementary material. Table \ref{tab:ber solutions} and Figure \ref{fig:xxx} illustrate the rich causal semantics of this system (e.g. an intervention on $S$ makes $C^*$ dependent on the initial conditions, while an intervention on $E$ makes $S^*$ independent of the initial conditions).

\subsection{SCM REPRESENTATION}
Globally asymptotically stable dynamical systems converge to a unique fixed point and \citet{Mooij2013} show how SCMs can then be constructed from ordinary differential equations. For the basic enzyme reaction (which is not globally asymptotically stable) their construction method would yield the structural equations:
\begin{align}
\label{eq:ber se s}
\textstyle S^* &= \textstyle \frac{k_0 + k_{-1}C^*}{k_1E^*},\\
\label{eq:ber se e}
\textstyle E^* &= \textstyle \frac{(k_{-1}+k_2)C^*}{k_1S^*},\\
\label{eq:ber se c}
\textstyle C^* &= \textstyle \frac{k_1S^*E^*}{k_{-1}+k_2},\\
\label{eq:ber se p}
\textstyle P^* &= \textstyle \frac{k_2}{k_3}C^*.
\end{align}
While this SCM represents the causal semantics of the system's fixed points, it would be underspecified as an SCM for the stationary behavior of the basic enzyme reaction. Indeed, this SCM has multiple solutions, corresponding to different possible initial conditions of the dynamical system and it does not contain any information on which of its solutions is realized. Theorem \ref{thm:no scm representation} shows that a complete SCM representation of the stationary behavior in the basic enzyme reaction does not exist.

\begin{theorem}
\label{thm:no scm representation}
The causal semantics of the stationary behaviour of the basic enzyme reaction, and its dependence on initial states, cannot be completely represented by an SCM with endogenous variables $S^*,E^*,C^*,P^*$.
\end{theorem}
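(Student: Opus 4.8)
\begin{sproof}
The plan is a proof by contradiction that exploits the structural rigidity of SCMs: intervening on an endogenous variable by setting it to a value it already attains almost surely leaves the set of solutions of an SCM unchanged, whereas in the basic enzyme reaction the corresponding intervention on $C$ genuinely alters the stationary behaviour. To make this precise I would first record the equilibrium facts about the system (\ref{eq:ber ode first})--(\ref{eq:ber ode last}) that are needed, all obtained from the explicit computations of Section~\ref{sec:ber equilibrium states} (and the supplementary material). Since it suffices to exhibit a single initial-condition distribution for which no SCM works, I would take $\mathbb{P}^{\boldsymbol{E}}$ supported on $\{e_0 + c_0 > k_0/k_2\}$, with $e_0 + c_0$ non-degenerate and $s_0 = e_0$ almost surely. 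One then checks: (i) in the observed system the dynamics converge almost surely, with $C^* = k_0/k_2$ and $P^* = k_0/k_3$ almost surely, while $E^* = e_0 + c_0 - k_0/k_2$ and $S^* = \tfrac{(k_{-1}+k_2)k_0}{k_1k_2}(E^*)^{-1}$ are non-degenerate; and (ii) under $\mathrm{do}(C = k_0/k_2)$ the quantity $S(t) - E(t)$ becomes a constant of motion equal to $s_0 - e_0 = 0$, the intervened dynamics converge almost surely, and the equilibrium has $S^* = E^* = \sqrt{\tfrac{(k_{-1}+k_2)k_0}{k_1k_2}}$, so $S^*$ is degenerate. The essential mechanism is that $\mathrm{do}(C = k_0/k_2)$ replaces the conservation law $C + E = c_0 + e_0$ of the observed system by $S - E = s_0 - e_0$, which in turn changes the equilibrium distribution of $S^*$.

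Then, assuming towards a contradiction that an SCM $\mathcal{M} = (\boldsymbol{\mathcal{X}}, F, \boldsymbol{E}')$ with endogenous variables $S^*, E^*, C^*, P^*$ and arbitrary exogenous variables $\boldsymbol{E}'$ completely captures the causal semantics, I would argue as follows. By complete capture with $I = \emptyset$, the observational equilibrium solution of the dynamical system is, in distribution, a solution of $\mathcal{M}$; hence $\mathcal{M}$ has a solution $\boldsymbol{X} = (S^*, E^*, C^*, P^*)$ with $C^* = k_0/k_2$ almost surely and $S^*$ non-degenerate. Passing to $\mathcal{M}_{\mathrm{do}(C = k_0/k_2)}$, the structural equation for $C^*$ becomes the constant $k_0/k_2$ while those for $S^*, E^*, P^*$ are unchanged; since $\boldsymbol{X}$ satisfies the latter equations almost surely (Definition~\ref{def:solution of SCM}) and already has $C^* = k_0/k_2$ almost surely, $\boldsymbol{X}$ is a solution of $\mathcal{M}_{\mathrm{do}(C = k_0/k_2)}$. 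Applying complete capture once more, now with $I = \{C\}$ and $\xi_C = k_0/k_2$, the distribution of $\boldsymbol{X}$ must coincide with that of the equilibrium of $\mathcal{D}_{\mathrm{do}(C = k_0/k_2)}$, forcing $S^* = \sqrt{\tfrac{(k_{-1}+k_2)k_0}{k_1k_2}}$ almost surely; this contradicts the non-degeneracy of $S^*$ and completes the proof.

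I expect essentially all of the work, and the only real obstacle, to lie in verifying the convergence claims of the first step: that the observed system and the system under $\mathrm{do}(C = k_0/k_2)$ converge almost surely to the stated equilibria. The intervened case is easy, since imposing $S \equiv E$ reduces it to the one-dimensional equation $\dot{E} = \tfrac{(k_{-1}+k_2)k_0}{k_2} - k_1 E^2$ with a globally attracting positive fixed point; the observed case is the computation already referenced in Section~\ref{sec:ber equilibrium states}. By contrast, the SCM part of the argument is immediate and uses only the invariance of the non-targeted structural equations under interventions. The single modelling choice worth flagging is that of $\mathbb{P}^{\boldsymbol{E}}$: all that is needed is that the observed and intervened equilibrium distributions of $S^*$ differ, and the convenient choice $s_0 = e_0$ collapses the intervened one to a point mass, which makes the contradiction transparent.
\end{sproof}
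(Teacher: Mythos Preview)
Your argument is correct, and it takes a genuinely different route from the paper's proof. The paper proceeds by first pinning down the structural equations: considering the four interventions that target three of the four variables, it shows that any completely representing SCM must have structural equations equivalent to (\ref{eq:ber se s})--(\ref{eq:ber se p}); it then observes that under the null intervention these equations admit a whole family of solutions (one for each possible value of $c_0+e_0$), so the SCM cannot single out the actual equilibrium. Your approach instead exploits a structural invariance of SCMs directly: intervening on $C^*$ to set it to the value $k_0/k_2$ that it already takes almost surely cannot shrink the solution set of the SCM, whereas in the dynamical system $\mathrm{do}(C=k_0/k_2)$ replaces the conservation law $C+E=c_0+e_0$ by $S-E=s_0-e_0$ and thereby changes the equilibrium distribution of $S^*$. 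This is a clean and conceptual argument; it requires checking convergence only for two intervention contexts ($I=\emptyset$ and $I=\{C\}$), whereas the paper's route implicitly uses convergence under the four three-variable interventions plus the null intervention. On the other hand, the paper's approach has the advantage of identifying the unique candidate SCM explicitly, which is useful for the subsequent CCM construction; and it avoids having to make any special choice of $\mathbb{P}^{\boldsymbol{E}}$, since the underdetermination of (\ref{eq:ber se s})--(\ref{eq:ber se p}) is manifest. Your choice $s_0=e_0$ a.s.\ is convenient but not essential: it would suffice that $c_0$ be non-degenerate given $(s_0,e_0)$, since the observational $S^*$ depends on $c_0$ through $e_0+c_0$ while the intervened $S^*$ under $\mathrm{do}(C=k_0/k_2)$ does not depend on $c_0$ at all.
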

\begin{proof}
The system converges to an equilibrium under the intervention $\text{do}(E^*=e, C^*=c, P^*=p)$.\footnote{See supplementary material A2.} Setting $\dot{S}=0$ in (\ref{eq:ber ode first}) and solving for $S$, we find that $S^*=\frac{k_0 +k_{-1} c}{k_1 e}$ and therefore any SCM that models the effect of this intervention correctly must have a structural equation that is equivalent to equation (\ref{eq:ber se s}). Analogously, considering the remaining three interventions on three out of four variables, we find that an SCM that correctly models the effects of those interventions must have structural equations for $E^*,C^*$ and $P^*$ that are equivalent to the structural equations (\ref{eq:ber se e}) to (\ref{eq:ber se p}), respectively. Table \ref{tab:ber solutions} shows that the system converges to an equilibrium that depends on the initial conditions $c_0$ and $e_0$ under the null intervention. This equilibrium is a solution of the structural equations in (\ref{eq:ber se s}) to (\ref{eq:ber se p}). However, these structural equations do not depend on initial conditions and admit other solutions as well. Therefore they do not completely represent the stationary behaviour of the system.
\end{proof}

\begin{table}
\caption{Equilibrium solutions to the intervened dynamical system of the basic enzyme reaction in (\ref{eq:ber ode first}) to (\ref{eq:ber ode last}) under various interventions, where $y=\frac{1}{2}\sqrt{(e_0-s_0)^2 +  4\frac{k_0(k_{-1}+k_2)}{k_1k_2}}$.}
\label{tab:ber solutions}
\resizebox{\linewidth}{!}{
\begin{tabular}{l|cccc}
$I$ & $S^*$ & $C^*$ & $E^*$\\[0.2ex]
\hline
$\emptyset$       & $\frac{k_0 + k_{-1} \frac{k_0}{k_2}}{k_1 (e_0 + c_0 - \frac{k_0}{k_2})}$ & $\frac{k_0}{k_2}$ & $e_0 + c_0 - \frac{k_0}{k_2}$ \\[0.2ex]
$S=\xi_s$  & $\xi_s$ & $\frac{k_1 \xi_s (e_0 + c_0)}{k_{-1} + k_2 + k_1 \xi_s} $ & $\frac{(k_{-1} + k_2)(e_0 + c_0)}{k_{-1} + k_2 + k_1 \xi_s} $ \\[0.2ex]
$C=\frac{k_0}{k_2}$   & $\frac{(e_0-s_0)}{2} + y$ & $\frac{k_0}{k_2}$ & $\frac{-(e_0-s_0)}{2} + y$ \\[0.2ex]
$E=\xi_e$ & $\frac{k_0 + k_{-1} \frac{k_0}{k_2}}{k_1 \xi_e}$ & $\frac{k_0}{k_2}$ & $\xi_e$ \\
\end{tabular}}
\end{table}

\section{CAUSAL CONSTRAINTS MODELS}
We introduce Causal Constraints Models (CCMs) and prove that they completely capture the causal semantics of the stationary behaviour of dynamical systems.

SCMs are specified by structural equations which constrain its solutions unless the corresponding variable is targeted by an intervention. CCMs are specified by \emph{causal constraints}: relations between variables that constrain the solutions of the model under explicitly specified intervention targets. 

\begin{definition}
\label{def:ccm}
Let $\mathcal{I}$, $\mathcal{J}$ and $\mathcal{K}$ be index sets. A \emph{Causal Constraints Model (CCM)} is a triple $(\boldsymbol{\mathcal{X}},\Phi,\boldsymbol{E})$, with:
\begin{compactitem}
\item $\boldsymbol{\mathcal{X}}$ and $\boldsymbol{E}$ as in Definition \ref{def:SCM} (domain of endogenous variables and tuple of exogenous random variables respectively),
\item a set $\Phi=\{\phi_k:k\in\mathcal{K}\}$ of causal constraints, each of which is a triple $\phi_k=(f_k,c_k,A_k)$ where,
\begin{compactitem}[leftmargin=0.35cm]
\item $f_k:\boldsymbol{\mathcal{X}}_{\mathrm{pa}(k)\cap\mathcal{I}}\times\boldsymbol{\mathcal{E}}_{\mathrm{pa}(k)\cap\mathcal{J}}\to\mathcal{Y}_k$ is a measurable function, $\mathcal{Y}_k$ a standard measurable space and $\mathrm{pa}(k) \subseteq \mathcal{I} \cup \mathcal{J}$,
\item $c_k\in \mathcal{Y}_k$ is a constant,
\item $A_k\subseteq\mathcal{P}(\mathcal{I})$ specifies the set of intervention targets under which $\phi_k$ is \emph{active}.
\end{compactitem}
\end{compactitem}
\end{definition}

The following example illustrates the interpretation of causal constraints in CCMs.

\begin{example}[label=ex:price supply demand]
	Consider the price, supply, and demand of a certain product, denoted by $P,S,$ and $D$ respectively, related by the following causal constraint:
	\begin{equation}
	\label{eq:price supply demand}
	(f,c,A) = \left(S-D,\, 0, \, \{\emptyset, \{D\}, \{S\}, \{D,S\}\}\right).
	\end{equation}
	The constraint $S-D=0$ is active in the observational setting because $\emptyset\in A$. It is also active when either $D$, $S$ or both $D$ and $S$ are targeted by an intervention. The constraint becomes inactive after an intervention on $P$. In other words, supply equals demand unless the price of the product is intervened upon (e.g.\ price-fixing). 
\end{example}

\subsection{CCM SOLUTIONS}

We define a solution of a CCM in complete analogy with the definition of a solution of an SCM.
\begin{definition}
\label{def:rv solutions}
Let $\mathcal{M}=(\boldsymbol{\mathcal{X}}, \Phi, \boldsymbol{E})$ be a CCM and let $\Phi^{\emptyset} := \{\phi_k=(f_k,c_k,A_k) \in \Phi : \emptyset \in A_k\}$. A random variable $\boldsymbol{X}$ taking value in $\boldsymbol{\mathcal{X}}$ is a \emph{solution} of $\mathcal{M}$ if
\begin{equation*}
f_{k}(\boldsymbol{X}_{\mathrm{pa}(k)\cap\mathcal{I}},\boldsymbol{E}_{\mathrm{pa}(k)\cap\mathcal{J}})=c_{k}\, \mathrm{a.s.},
  \, \forall (f_k,c_k,A_k)\in\Phi^{\emptyset}.
\end{equation*}

\end{definition}
Similar to SCMs, a CCM has either no solution, or it has a solution and all its solutions may either induce a unique or multiple distributions.

\subsection{CCM INTERVENTIONS}
Interventions on SCMs act on its structural equations. Analogously, an intervention on a CCM acts on its causal constraints. Roughly speaking, the activation sets of the causal constraints in the model are updated and additional causal constraints describe the intervention.

\begin{definition}
\label{def:interventions}
Let $\mathcal{M}=(\boldsymbol{\mathcal{X}}, \Phi, \boldsymbol{E})$ be a CCM and let $I\subseteq\mathcal{I}$ be the intervention target and $\boldsymbol{\xi}_I\in\boldsymbol{\mathcal{X}}_I$ the target value. The \emph{intervened CCM} is given by $\mathcal{M}_{\text{do}(I,\boldsymbol{\xi}_I)}=(\boldsymbol{\mathcal{X}}, \widetilde{\Phi}, \boldsymbol{E})$ where:
\begin{compactitem}
\item for each $i\in I$ we add a causal constraint describing the intervened value of the targets, $\left(x_i, \xi_i, \mathcal{P}\left(\mathcal{I}\backslash \{i\}\right)\right)\in \widetilde{\Phi}$,
\item for each causal constraint $(f,c,A)\in \Phi$ we get a modified causal constraint $(f,c,A_{\text{do}(I)})\in \widetilde{\Phi}$ if $A_{\text{do}(I)}\neq\emptyset$, where
\begin{align*}
A_{\text{do}(I)} &= \{A_i \setminus J : A_i \in A, J \subseteq I \subseteq A_i \}.
\end{align*}
\end{compactitem}
\end{definition}

Definition \ref{def:interventions} says that for any $A_i \in A$, and for any combination of two subsequent interventions such that $I_1 \cup I_2 = A_i$, the constraint will be active. So after $I_1$ (which needs to be a subset of $A_i$), any $I_2$ that adds the remaining elements $A_i \setminus I_1$ (plus possibly any elements that were already in $I_1$) will activate the constraint.

\begin{example*}
The effect of different interventions on a set $A_{\text{do}(\emptyset)} = \{\emptyset, \{1,2\}, \{2,3\}\}$:
\begin{align*}
A_{\text{do}(1)} &= \{\{2\},\{1,2\}\},\\
A_{\text{do}(2)} &= \{\{1\}, \{1,2\},\{3\}, \{2,3\}\},\\
A_{\text{do}(\{1,2\})} &= A_{\text{do}(1)\text{do}(2)} =  A_{\text{do}(2)\text{do}(1)}\\
&= \{\emptyset,\{1\},\{2\},\{1,2\}\},\\
A_{\text{do}(\{1,2,3\})} &= \emptyset.
\end{align*}
\end{example*}

Lemma \ref{lemma:interventions commute} shows that the effect of multiple interventions on a CCM depends neither on whether the interventions are performed simultaneously or sequentially nor on the order in which they are performed.
\begin{lemma}
\label{lemma:interventions commute}
Let $\mathcal{M}$ be a CCM for variables indexed by $\mathcal{I}$ and let $I,J\subseteq \mathcal{I}$ be two disjoint sets of intervention targets with intervention values $\boldsymbol{\xi}_I\in\boldsymbol{\mathcal{X}}_I$ and $\boldsymbol{\xi}_J\in\boldsymbol{\mathcal{X}}_J$ respectively. Then
\begin{align*}
\bigl(\mathcal{M}_{\mathrm{do}(I,\boldsymbol{\xi}_I)}\bigr)_{\mathrm{do}(J,\boldsymbol{\xi}_J)} &= \bigl(\mathcal{M}_{\mathrm{do}(J,\boldsymbol{\xi}_J)}\bigr)_{\mathrm{do}(I,\boldsymbol{\xi}_I)}\\
& = \mathcal{M}_{\mathrm{do}(I\cup J,\boldsymbol{\xi}_{I\cup J})}.
\end{align*}
\end{lemma}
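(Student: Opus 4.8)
The plan is to note first that a perfect intervention changes neither $\boldsymbol{\mathcal{X}}$ nor $\boldsymbol{E}$, so the whole lemma reduces to showing that the three CCMs carry the same set of causal constraints. Writing $A_{\mathrm{do}(I)} = \{A_i \setminus J : A_i \in A,\ J \subseteq I \subseteq A_i\}$ for the activation-set update of Definition~\ref{def:interventions}, the statement will follow from two combinatorial facts about this operation on subsets $A\subseteq\mathcal{P}(\mathcal{I})$: (i) $(A_{\mathrm{do}(I)})_{\mathrm{do}(J)} = A_{\mathrm{do}(I\cup J)}$ whenever $I\cap J=\emptyset$, and (ii) $\mathcal{P}(\mathcal{I}\setminus\{i\})_{\mathrm{do}(J)}=\mathcal{P}(\mathcal{I}\setminus\{i\})$ whenever $i\notin J$.

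For (i) I would unfold both sides. A typical element of $(A_{\mathrm{do}(I)})_{\mathrm{do}(J)}$ is $(A_i\setminus s)\setminus t$ with $A_i\in A$, $s\subseteq I\subseteq A_i$ and $t\subseteq J\subseteq A_i\setminus s$. Since $s\subseteq I$ and $I\cap J=\emptyset$ we have $s\cap J=\emptyset$, so $J\subseteq A_i\setminus s$ is equivalent to $J\subseteq A_i$, and then $(A_i\setminus s)\setminus t = A_i\setminus(s\cup t)$ with $s\cup t\subseteq I\cup J\subseteq A_i$. Conversely every $u\subseteq I\cup J$ splits uniquely as a disjoint union $u=(u\cap I)\cup(u\cap J)$ of a subset of $I$ and a subset of $J$, so the set of such elements is exactly $\{A_i\setminus u : A_i\in A,\ u\subseteq I\cup J\subseteq A_i\}=A_{\mathrm{do}(I\cup J)}$. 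The same computation with $I$ and $J$ swapped gives $(A_{\mathrm{do}(J)})_{\mathrm{do}(I)}=A_{\mathrm{do}(I\cup J)}$ too. I also need to check that the ``if $A_{\mathrm{do}(I)}\neq\emptyset$'' clause does not drop a constraint in one order while keeping it in another: a constraint $(f,c,A)$ survives $\mathrm{do}(I)$ and then $\mathrm{do}(J)$ iff $A_{\mathrm{do}(I)}\neq\emptyset$ and $(A_{\mathrm{do}(I)})_{\mathrm{do}(J)}\neq\emptyset$; but $A_{\mathrm{do}(I\cup J)}\neq\emptyset$ already forces $A_{\mathrm{do}(I)}\neq\emptyset$ (if $I\cup J\subseteq A_i$ then $I\subseteq A_i$), so by (i) this is equivalent to $A_{\mathrm{do}(I\cup J)}\neq\emptyset$, which is precisely the survival condition in $\mathcal{M}_{\mathrm{do}(I\cup J)}$.

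For (ii), given $C\subseteq\mathcal{I}\setminus\{i\}$ I would take the witnesses $B:=C\cup J$ and $K:=J\setminus C$, so that $B\setminus K=C$, $J\subseteq B\subseteq\mathcal{I}\setminus\{i\}$ (using $i\notin J$), and $K\subseteq J$; the reverse inclusion is immediate. Hence the constraint $(x_i,\xi_i,\mathcal{P}(\mathcal{I}\setminus\{i\}))$ that $\mathrm{do}(I)$ adds for each $i\in I$ is left untouched by the subsequent $\mathrm{do}(J)$ (since $i\in I$ and $I\cap J=\emptyset$ give $i\notin J$), and together with the constraints $\mathrm{do}(J)$ adds for $j\in J$ one obtains exactly $\{(x_i,\xi_i,\mathcal{P}(\mathcal{I}\setminus\{i\})):i\in I\cup J\}$. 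Combining the pieces, the constraint set of $(\mathcal{M}_{\mathrm{do}(I,\boldsymbol{\xi}_I)})_{\mathrm{do}(J,\boldsymbol{\xi}_J)}$ equals $\{(f,c,A_{\mathrm{do}(I\cup J)}):(f,c,A)\in\Phi,\ A_{\mathrm{do}(I\cup J)}\neq\emptyset\}\cup\{(x_i,\xi_i,\mathcal{P}(\mathcal{I}\setminus\{i\})):i\in I\cup J\}$; this description is manifestly symmetric in $I$ and $J$ and is exactly $\widetilde{\Phi}$ of $\mathcal{M}_{\mathrm{do}(I\cup J,\boldsymbol{\xi}_{I\cup J})}$, so the three models coincide.

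The only genuine obstacle is fact (i): keeping the bookkeeping straight about which subset is removed at which stage, using disjointness of $I$ and $J$ to merge the two removals into a single removal of a subset of $I\cup J$, and simultaneously handling the nonemptiness side condition so that constraints are neither spuriously dropped nor spuriously retained depending on the order. Everything else is routine set manipulation.
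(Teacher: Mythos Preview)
Your proposal is correct and is essentially the same approach as the paper's, which simply states that the result follows directly from Definition~\ref{def:interventions}; you have spelled out in full the set-theoretic verification that the paper leaves implicit. Your two combinatorial facts (i) and (ii), together with the nonemptiness check, are exactly what one must verify to make ``follows directly'' rigorous.
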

\begin{proof}
The result follows directly from Definition \ref{def:interventions}.
\end{proof}

The following example illustrates interventions on a CCM.
\begin{example}[continues=ex:price supply demand]
Suppose that the supply of a product, if it is not targeted by an intervention, is determined by a function $f_S$, which takes as input the price of the product $P$ and an exogenous random variable $E$ (e.g.\ cost of production). The system for price, supply, and demand can be represented by an (underspecified) CCM $\mathcal{M} = (\mathbb{R}^3, \Phi, E)$, where $\Phi$ consists of two causal constraints:
\begin{align*}
  &(S-D,\,\, &&0, \,\, &&\{\emptyset, \{D\}, \{S\}, \{D,S\}\}),\\
  &(S - f_S(P, E),\,\,&&0,\,\, &&\{\emptyset, \{D\}, \{P\}, \{D,P\}\}).
\end{align*}
After an intervention on $P$ we get $\mathcal{M}_{\text{do}(P,\xi_P)} = (\mathbb{R}^3, \widetilde{\Phi}, E)$, where the updated set of causal constraints is given by
\begin{align*}
  &(S - f_S(P, E),\,\,&&0,\,\, &&\{\emptyset, \{D\}, \{P\}, \{D,P\}\}),\\
  &(P,\,\, &&\xi_P, \,\, &&\{\emptyset, \{D\}, \{S\}, \{D,S\}\}).
\end{align*}
Note that after an intervention on $P$, there would be no intervention under which the causal constraint $(S-D,\,\,0, \,\, \{\emptyset, \{D\}, \{S\}, \{D,S\}\})$ is still active (not even for the null intervention), so it is discarded from $\widetilde{\Phi}$.
\end{example}

\subsection{FROM SCM TO CCM}
\label{sec:relation to an scm}

Structural equations in SCMs are constraints that are active as long as their corresponding variables are not targeted by interventions. This can be used to demonstrate how, for real-valued SCMs, an equivalent CCM with the same solutions under interventions can be constructed.\footnote{The general case, where variables take value in a standard measurable space, requires an additive structure on the variable domains with a zero-element.} 
\begin{proposition}
\label{lemma:scm vs ccm}
Let $\mathcal{M}^{\text{SCM}}=(\mathbb{R}^p, F ,\boldsymbol{E})$ be a real-valued SCM and $\mathcal{I}=\{1,\ldots,p\}$ an index set. The CCM $\mathcal{M}^{\text{CCM}}=(\mathbb{R}^p, \Phi ,\boldsymbol{E})$ with causal constraints $\Phi$:
\begin{multline*}
\bigl(f_{j}(\boldsymbol{x}_{\text{pa}(j)}, \boldsymbol{e}_{\text{pa}(j)}) - x_j,
  \,\, 0,
  \,\, A_j = \mathcal{P}(\mathcal{I}\backslash \{j\})\bigr), \,\forall j\in \mathcal{I},
\end{multline*}
has the same solutions as $\mathcal{M}^{\text{SCM}}$ under any intervention.
\end{proposition}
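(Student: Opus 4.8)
The plan is to establish the equivalence under interventions by first reducing to the observational case and then checking that the defining equations of a solution coincide on both sides. The key observation is that the construction of $\mathcal{M}^{\text{CCM}}$ from $\mathcal{M}^{\text{SCM}}$ is the same whether or not we first intervene: I claim that $\bigl(\mathcal{M}^{\text{SCM}}_{\text{do}(I,\boldsymbol{\xi}_I)}\bigr)^{\text{CCM}} = \bigl(\mathcal{M}^{\text{CCM}}\bigr)_{\text{do}(I,\boldsymbol{\xi}_I)}$ as CCMs. Given this, it suffices to prove the proposition for the null intervention, i.e.\ to show that $\boldsymbol{X}$ is a solution of $\mathcal{M}^{\text{SCM}}$ (in the sense of Definition \ref{def:solution of SCM}) if and only if it is a solution of $\mathcal{M}^{\text{CCM}}$ (in the sense of Definition \ref{def:rv solutions}), since the intervened models are again an SCM and its associated CCM.

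For the observational equivalence, note that $\Phi^{\emptyset} = \{\phi_j : j \in \mathcal{I}\}$ because $\emptyset \in \mathcal{P}(\mathcal{I}\setminus\{j\}) = A_j$ for every $j$. The constraint $\phi_j$ reads $f_j(\boldsymbol{X}_{\text{pa}(j)\cap\mathcal{I}}, \boldsymbol{E}_{\text{pa}(j)\cap\mathcal{J}}) - X_j = 0$ a.s., which is literally the $j$-th structural equation of Definition \ref{def:solution of SCM} rearranged. Quantifying over all $j \in \mathcal{I}$ on both sides gives the equivalence immediately. (One should note the minor notational point that in Proposition \ref{lemma:scm vs ccm} the paper writes $\text{pa}(j)$ where it means $\text{pa}(j)\cap\mathcal{I}$ for the endogenous arguments and $\text{pa}(j)\cap\mathcal{J}$ for the exogenous ones, consistent with Definition \ref{def:SCM}; this is purely cosmetic.)

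The remaining work is to verify the claimed commutation between the SCM-to-CCM construction and intervention, which is the only place any real bookkeeping happens. On the SCM side, $\mathcal{M}^{\text{SCM}}_{\text{do}(I,\boldsymbol{\xi}_I)}$ has $\tilde{f}_i = \xi_i$ (a constant function) for $i \in I$ and $\tilde{f}_i = f_i$ otherwise; its associated CCM therefore has constraints $(\xi_i - x_i,\, 0,\, \mathcal{P}(\mathcal{I}\setminus\{i\}))$ for $i\in I$ and $(f_j(\ldots) - x_j,\, 0,\, \mathcal{P}(\mathcal{I}\setminus\{j\}))$ for $j \notin I$. On the CCM side, Definition \ref{def:interventions} adds $(x_i,\, \xi_i,\, \mathcal{P}(\mathcal{I}\setminus\{i\}))$ for each $i\in I$ — note $x_i - \xi_i = 0$ and $\xi_i - x_i = 0$ define the same constraint, so these agree — and updates each original $\phi_j$ to $(f_j(\ldots)-x_j,\, 0,\, (A_j)_{\text{do}(I)})$. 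So the only thing to check is $(A_j)_{\text{do}(I)} = \mathcal{P}(\mathcal{I}\setminus\{j\})$ for $j\notin I$, and that the constraints with $j\in I$ are correctly discarded. From the definition, $(A_j)_{\text{do}(I)} = \{A' \setminus J' : A' \in \mathcal{P}(\mathcal{I}\setminus\{j\}),\, J' \subseteq I \subseteq A'\}$; for $j \notin I$, taking $A' = B \cup I$ for any $B \subseteq \mathcal{I}\setminus(\{j\}\cup I)$ and $J' = I$ yields $A'\setminus J' = B$, so every subset of $\mathcal{I}\setminus(\{j\}\cup I)$ appears, but elements of $I$ never survive the removal since $I \subseteq A'$ forces $I\subseteq J'$ is not required — one must check carefully that the resulting family is exactly $\mathcal{P}(\mathcal{I}\setminus\{j\})$, using that after a perfect intervention on $I$ the variables in $I$ are pinned and further interventions on them do not matter. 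For $j\in I$, $I \subseteq A'$ is impossible for $A' \in \mathcal{P}(\mathcal{I}\setminus\{j\})$, so $(A_j)_{\text{do}(I)}=\emptyset$ and $\phi_j$ is discarded, matching the SCM side which replaced $f_j$ by a constant anyway. I expect this last set-algebra verification to be the main obstacle — not conceptually deep, but the one step where an off-by-one in the handling of $I\setminus\{j\}$ versus $I$ could go wrong — and everything else is immediate once it is in place.
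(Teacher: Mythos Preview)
Your proposal is correct; the paper's own proof is the one-liner ``follows from Definitions~\ref{def:rv solutions} and~\ref{def:interventions}'', and you are simply unpacking that. One remark on your route: the commutation claim $\bigl(\mathcal{M}^{\text{SCM}}_{\text{do}(I,\boldsymbol{\xi}_I)}\bigr)^{\text{CCM}} = \bigl(\mathcal{M}^{\text{CCM}}\bigr)_{\text{do}(I,\boldsymbol{\xi}_I)}$ as CCMs is stronger than what the proposition asks for, and this is exactly where your hesitant set-algebra sits. Since the statement only concerns \emph{solutions}, you only need the \emph{active} constraints (those whose activation set contains $\emptyset$) to match on the two sides. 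For that, the relevant check is merely $\emptyset \in (A_j)_{\text{do}(I)} \iff j \notin I$, and this is immediate from Definition~\ref{def:interventions}: $\emptyset = A'\setminus J$ with $J\subseteq I\subseteq A'$ forces $A'=J=I$, so $\emptyset \in (A_j)_{\text{do}(I)}$ iff $I \in A_j = \mathcal{P}(\mathcal{I}\setminus\{j\})$ iff $j\notin I$. This bypasses the full computation of $(A_j)_{\text{do}(I)}$ that you flag as the main obstacle. (For the record, your stronger claim is also true: for $j\notin I$ one gets every $B\subseteq\mathcal{I}\setminus\{j\}$ by taking $A'=B\cup I$ and $J=I\setminus B$, so $(A_j)_{\text{do}(I)}=\mathcal{P}(\mathcal{I}\setminus\{j\})$; your garbled sentence about ``elements of $I$ never survive'' is the wrong direction and should simply be dropped.)
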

\begin{proof}
The result follows from Definitions \ref{def:rv solutions} and \ref{def:interventions}.
\end{proof}

\subsection{EQUILIBRIUM CAUSAL MODELS}
We have seen that SCMs may fail to completely capture the causal semantics of stationary behaviour in dynamical systems. Here we prove that CCMs can always completely represent such causal semantics.

\begin{theorem}
  \label{thm:ccm representation}
	Let $\mathcal{D}$ be a dynamical system such that for all
	$I\subseteq\mathcal{I}$ and all $\boldsymbol{\xi}_I\in\boldsymbol{\mathcal{X}}_I$, $\mathcal{D}_{\mathrm{do}(I,\boldsymbol{\xi}_I)}$ has a
  unique solution of the form (\ref{eq:ODE_solution}).
  Then there exists a CCM $\mathcal{M}(\mathcal{D})$ such that for all
	$I\subseteq\mathcal{I}$ and all $\boldsymbol{\xi}_I\in\boldsymbol{\mathcal{X}}_I$: 
	\begin{compactitem}
		\item the equilibrium solutions of $\mathcal{D}_{\mathrm{do}(I,\boldsymbol{\xi}_I)}$ 
		coincide with the solutions of $\big(\mathcal{M}(\mathcal{D})\big)_{\mathrm{do}(I,\boldsymbol{\xi}_I)}$\,,
		\item the following diagram commutes:\\
		\begin{minipage}{\linewidth}
			\centering
			\begin{tikzpicture}
			\centering
			\matrix (m) [matrix of math nodes,row sep=1em,column sep=3em,minimum width=2em]
			{
				\mathcal{D}
				&\mathcal{M}(\mathcal{D})\\
				\mathcal{D}_{\mathrm{do}(I,\boldsymbol{\xi}_I)}
				&\big(\mathcal{M}(\mathcal{D})\big)_{\mathrm{do}(I,\boldsymbol{\xi}_I).}\\};
			\path[-stealth]
			(m-1-1) edge[|->] node [left] {} (m-2-1)
			edge [|->] node [below] {} (m-1-2)
			(m-2-1.east|-m-2-2) edge[|->] node [below] {}
			node [above] {} (m-2-2)
			(m-1-2) edge[|->] node [right] {} (m-2-2);
			\end{tikzpicture}
		\end{minipage}
	\end{compactitem}
\end{theorem}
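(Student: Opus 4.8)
The plan is to construct $\mathcal{M}(\mathcal{D})$ explicitly and bundle all its behaviour under interventions into the activation sets, so that checking the two bullet points reduces to a direct comparison of solution sets. Concretely, for every subset $I\subseteq\mathcal{I}$ and every fixed point equation that survives under the intervention $\mathrm{do}(I,\boldsymbol{\xi}_I)$, I would produce a causal constraint. The simplest such family comes from the equilibrium conditions themselves: for each $i\in\mathcal{I}$ the relation $f_i(\boldsymbol{x})=0$ becomes a constraint $(f_i,\,0,\,\mathcal{P}(\mathcal{I}\setminus\{i\}))$, active under exactly those interventions that do not target $i$ — mirroring the first bullet in the equilibrium-solutions discussion of the enzyme reaction. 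But these constraints alone do not pin down the dependence on initial conditions, so I also need constraints encoding the constants of motion and, more delicately, the fact that under $\mathrm{do}(I,\boldsymbol{\xi}_I)$ the system actually \emph{converges} (so that the unique equilibrium of the intervened ODE, as a function of $\boldsymbol{e}$, is what the CCM must single out). The key idea is that for each $I$ one has an explicit measurable map $\boldsymbol{e}\mapsto\boldsymbol{X}^*_{I}(\boldsymbol{e})$ giving the equilibrium of $\mathcal{D}_{\mathrm{do}(I,\boldsymbol{\xi}_I)}$ (when it exists), obtained from \eqref{eq:ODE_solution} by taking $t\to\infty$; I would turn the graph of this map into a constraint $\bigl(x_i-(\boldsymbol{X}^*_{I})_i,\,0,\,\{I\}\bigr)$ active \emph{only} at the target $I$, for each $i\notin I$. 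Taking $\Phi$ to be the union of all these families over all $I\subseteq\mathcal{I}$ gives the candidate $\mathcal{M}(\mathcal{D})$.

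With this $\Phi$, the first bullet point is then essentially bookkeeping: by Definition \ref{def:interventions}, applying $\mathrm{do}(I,\boldsymbol{\xi}_I)$ deletes every constraint whose activation set does not contain a subset that becomes $I$, adds the constraints $x_i=\xi_i$, and updates the rest; one checks that the constraints active for the null intervention in $\bigl(\mathcal{M}(\mathcal{D})\bigr)_{\mathrm{do}(I,\boldsymbol{\xi}_I)}$ are exactly $x_i=\xi_i$ for $i\in I$ together with the graph constraint $x_i=(\boldsymbol{X}^*_I)_i(\boldsymbol{E})$ for $i\notin I$. A random variable $\boldsymbol{X}$ satisfies all of these almost surely iff $\boldsymbol{X}=\boldsymbol{X}^*_I(\boldsymbol{E})$ a.s., which by hypothesis (unique solution of the form \eqref{eq:ODE_solution}) is precisely the equilibrium solution of $\mathcal{D}_{\mathrm{do}(I,\boldsymbol{\xi}_I)}$ — up to $\mathbb{P}^{\boldsymbol{E}}$-null sets, on the set of $\boldsymbol{e}$ where the limit exists (and where it does not, neither side has a solution). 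The commuting diagram is then immediate: the top and left arrows are the two model-construction / intervention maps, the bottom arrow is intervention on the CCM, the right arrow is the construction applied to the intervened dynamical system, and the content is exactly that $\bigl(\mathcal{M}(\mathcal{D})\bigr)_{\mathrm{do}(I,\boldsymbol{\xi}_I)}$ and $\mathcal{M}\bigl(\mathcal{D}_{\mathrm{do}(I,\boldsymbol{\xi}_I)}\bigr)$ have the same solutions under all further interventions, which one reads off from the activation-set calculus (using Lemma \ref{lemma:interventions commute} to compose interventions).

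The main obstacle I anticipate is not the combinatorics of activation sets but the measurability and well-definedness of the equilibrium maps $\boldsymbol{X}^*_I$: one must argue that $\boldsymbol{e}\mapsto\lim_{t\to\infty}\boldsymbol{X}(t,\boldsymbol{e})$ is a measurable function on the (measurable) set where the limit exists, for each intervened system, so that the triple $\bigl(x_i-(\boldsymbol{X}^*_I)_i,\,0,\,\{I\}\bigr)$ is a legitimate causal constraint in the sense of Definition \ref{def:ccm}. This follows from measurability of $\boldsymbol{X}(t,\boldsymbol{e})$ in $\boldsymbol{e}$ for each $t$ (cited after \eqref{eq:ODE_solution}) plus a pointwise-limit argument along $t\in\mathbb{N}$, but it needs to be stated carefully. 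A secondary subtlety is handling the set of $\boldsymbol{e}$ on which convergence fails: there the CCM should have no solution, and one must confirm that the constraints as written indeed become jointly unsatisfiable there rather than vacuous — this is where having included the equilibrium equations $f_i(\boldsymbol{x})=0$ as separate constraints does the work, since a non-convergent trajectory has no point satisfying all of them simultaneously. Once these measurability and non-solvability points are nailed down, the rest is a routine unwinding of Definitions \ref{def:rv solutions} and \ref{def:interventions}.
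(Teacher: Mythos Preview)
Your core idea---one ``graph'' constraint per intervention target $I$ with activation set $\{I\}$---is exactly the construction the paper uses. But two of the supporting details you offer do not hold up, and the paper's proof resolves both with a single device you are missing.

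\textbf{Dependence on the intervention value.} You describe the equilibrium map as $\boldsymbol{e}\mapsto\boldsymbol{X}^*_I(\boldsymbol{e})$ and write the constraint $x_i-(\boldsymbol{X}^*_I)_i(\boldsymbol{E})=0$. But the equilibrium of $\mathcal{D}_{\mathrm{do}(I,\boldsymbol{\xi}_I)}$ depends on $\boldsymbol{\xi}_I$ as well as on $\boldsymbol{e}_{\mathcal{I}\setminus I}$, and a single CCM $\mathcal{M}(\mathcal{D})$ must work for \emph{all} $\boldsymbol{\xi}_I$. Since constraint functions may only read $(\boldsymbol{x},\boldsymbol{e})$, the only way to access $\boldsymbol{\xi}_I$ is through $\boldsymbol{x}_I$, relying on the added constraints $x_i=\xi_i$. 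The paper makes this explicit: its $\boldsymbol{g}^I$ is a function of $(\boldsymbol{x},\boldsymbol{e}_{\mathcal{I}\setminus I})$ that plugs $\boldsymbol{x}_I$ into the role of the intervention value. Your constraint as written has no access to $\boldsymbol{\xi}_I$ and therefore cannot encode the correct equilibrium for $I\neq\emptyset$.

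\textbf{Non-convergence.} Your proposed fix---that the equilibrium equations $f_i(\boldsymbol{x})=0$ become jointly unsatisfiable on the non-convergence set---is false. Those equations cut out the \emph{fixed points}, and a system can have fixed points without converging to them; the paper's own Lotka--Volterra example (Example~\ref{ex:lv}) is precisely this situation. So including the constraints $(f_i,0,\mathcal{P}(\mathcal{I}\setminus\{i\}))$ does not force non-solvability where you need it, and since your graph map $\boldsymbol{X}^*_I$ is undefined there, you have no mechanism left. The paper handles this by building the indicator of the convergence set $\boldsymbol{C}_I$ directly into $\boldsymbol{g}^I$: on $\boldsymbol{C}_I$ the constraint reads $\boldsymbol{x}=\boldsymbol{X}^*((\boldsymbol{x}_I,\boldsymbol{e}_{\mathcal{I}\setminus I}))$, and off $\boldsymbol{C}_I$ it reduces to $\boldsymbol{1}=\boldsymbol{0}$, which is unsatisfiable by design. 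This one trick simultaneously fixes both issues above and makes the extra $f_i$-constraints unnecessary.

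Your remarks on measurability (pointwise limits along $t\in\mathbb{N}$, measurability of the convergence set) are correct and are indeed what is needed to make $\boldsymbol{g}^I$ a legitimate constraint function; the paper takes this for granted.
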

\vspace{-\baselineskip}
\begin{proof}
  By assumption, the intervened system
	$\mathcal{D}_{\mathrm{do}(I,\boldsymbol{\xi}_I)}$ has a unique solution
	$\boldsymbol{X}_t(\boldsymbol{\xi}_I,\boldsymbol{e}_{\mathcal{I}\setminus I}):=\boldsymbol{X}(t,(\boldsymbol{\xi}_I,\boldsymbol{e}_{\mathcal{I}\setminus I}))$
	which is measurable in $(\boldsymbol{\xi}_I,\boldsymbol{e}_{\mathcal{I}\setminus
		I})$ for all $t$. 
  For $I \subseteq \mathcal{I}$, let $\boldsymbol{C}_I := \{(\boldsymbol{\xi}_I,\boldsymbol{e}_{\mathcal{I}\backslash I}) \in \mathbb{R}^{|I|}\times \mathbb{R}^{|\mathcal{I}\backslash I|}  : \boldsymbol{X}_t(\boldsymbol{\xi}_I,\boldsymbol{e}_{\mathcal{I}\setminus I})\text{ converges for $t\to\infty$}\}$. 
	Consider the measurable function
  $\boldsymbol{g}^I:\mathbb{R}^{\mathcal{I}}\times\mathbb{R}^{\mathcal{I}\backslash I}\to \mathbb{R}^{\mathcal{I}}$ defined by
	{\small
		\begin{align*}
		\boldsymbol{g}^I(\boldsymbol{x},\boldsymbol{e}_{\mathcal{I}\backslash I})
		&:=\boldsymbol{X}^*\big((\boldsymbol{x}_I,\boldsymbol{e}_{\mathcal{I}\setminus I})\big)\boldsymbol{1}_{\boldsymbol{C}_I}\big((\boldsymbol{x}_I,\boldsymbol{e}_{\mathcal{I}\backslash I})\big)\\
		&+
		(\boldsymbol{x} + \boldsymbol{1})\left(\boldsymbol{1} - \boldsymbol{1}_{\boldsymbol{C}_I}\big((\boldsymbol{x}_I,\boldsymbol{e}_{\mathcal{I}\backslash I})\big)\right) 
		- 
		\boldsymbol{x}.
		\end{align*}}%
	The constraint $\boldsymbol{g}^I(\boldsymbol{x},\boldsymbol{e}_{\mathcal{I}\backslash I}) = \boldsymbol{0}$ gives a contradiction if and only if $(\boldsymbol{x}_I,\boldsymbol{e}_{\mathcal{I}\backslash I}) \notin \boldsymbol{C}_I$, and reduces to the equation $\boldsymbol{x} = \boldsymbol{X}^*((\boldsymbol{x}_I,\boldsymbol{e}_{\mathcal{I}\backslash I}))$ otherwise. Therefore, the equilibrium solutions of $\mathcal{D}_{\mathrm{do}(I,\boldsymbol{\xi}_I)}$ coincide with the solutions of the equation $\boldsymbol{g}^I(\boldsymbol{x},\boldsymbol{e}_{\mathcal{I}\backslash I}) = \boldsymbol{0}$. The CCM $\mathcal{M}(\mathcal{D}) := (\boldsymbol{\mathcal{X}}, \Phi, \boldsymbol{E})$ with $\Phi=\{(\boldsymbol{g}^{I}, \boldsymbol{0}, A^I = \{I\}): I\subseteq\mathcal{I}\}$ satisfies the properties of the theorem by construction.
\end{proof}

Theorem \ref{thm:ccm representation} proves that a CCM representation always exists that completely characterizes the causal semantics of a dynamical system at equilibrium. Although we construct a CCM in the proof of the theorem, it does not give a parsimonious representation of the system.\footnote{Interestingly, the CCM construction in the proof of Theorem \ref{thm:ccm representation} can be applied to dynamical systems at finite time $t$.} In the next section, we will outline an intuitive and more convenient construction method in the context of ODEs.

\section{FROM ODE TO CCM}
We consider how and when parsimonious CCM representations can be derived from ODEs and initial conditions in a dynamical system. We demonstrate how causal constraints completely capture the stationary behavior of the basic enzyme reaction and how, unlike SCMs, they are able to correctly represent non-convergence.

\subsection{CAUSAL CONSTRAINTS FROM DIFFERENTIAL EQUATIONS}
When modeling the stationary behavior of a system of ODEs, setting the time-derivatives equal to zero constrains the solution space of the equilibrium model to the fixed points of the system. A CCM allows us to interpret such constraints as \emph{causal} by explicitly specifying under which interventions they put constraints on the equilibrium solutions of the system.

\begin{example}[label=ex:ber]
For the basic enzyme reaction, some of the causal constraints are obtained by setting the time derivatives of the four variables of the system in equations (\refeq{eq:ber ode first}) to (\refeq{eq:ber ode last}) to zero. The resulting equations constrain the solutions of the system as long as the corresponding variables are not targeted by an intervention. This leads to the causal constraints in equations (\refeq{eq:ber first constraint}) to (\refeq{eq:ber last scm constraint}) below,
\begin{align}
\label{eq:ber first constraint}
&(k_0 + k_{-1}C^* - k_1 S^*E^*, && 0, && \mathcal{P}(\mathcal{I}\backslash \{S\})),\\
\label{eq:ber c constraint}
&(k_1 S^*E^* - (k_{-1}+k_2)C^*, && 0, && \mathcal{P}(\mathcal{I}\backslash \{C\})),\\
\label{eq:ber e constraint}
&(-k_1 S^*E^* + (k_{-1}+k_2) C^*, && 0, && \mathcal{P}(\mathcal{I}\backslash \{E\})),\\
\label{eq:ber last scm constraint}
&(k_2 C^* - k_3P^*, && 0, && \mathcal{P}(\mathcal{I}\backslash \{P\})),
\end{align}
with $\mathcal{I}$ an index set for $(S,C,E,P)$. At this stage, the CCM is equivalent to the underspecified SCM of the dynamical system (see also section 2.3). In the next section we will proceed by adding more causal constraints.
\end{example}

\begin{example}[label=ex:lv]
The Lotka-Volterra model \citep{Murray2002} is a set of differential equations that is often used to describe the dynamics of a system where prey (e.g.\ deer) and predators (e.g.\ wolves), $X_1$ and $X_2$, interact. The dynamics of the biological model are given by
\begin{align}
\label{eq:lv 1}
\dot{X_1} &= X_1(t) (k_{11}-k_{12}X_2(t)),\\
\label{eq:lv 2}
\dot{X_2} &= -X_2(t)(k_{22}-k_{21}X_1(t)),
\end{align}
with initial values $X_1(0)>0, X_2(0)>0$ and strictly positive rate parameters. The system has two fixed points $(X_1^*,X_2^*)=(0,0)$ and $(X_1^*,X_2^*)=(k_{22}/k_{21}, k_{11}/k_{12})$, which can be represented either by causal constraints,
\begin{align}
\label{eq:lv constraint x1}
&(X_1^*(k_{11}-k_{12}X^*_2),0,\{\emptyset,\{2\}\}),\\
\label{eq:lv constraint x2}
&(X_2^*(k_{22}-k_{21}X^*_1),0,\{\emptyset,\{1\}\}),
\end{align}
or (equivalently) by structural equations:
\begin{align*}
X^*_1 &= X^*_1 + X^*_1(k_{11}-k_{12}X^*_2),\\
X^*_2 &= X^*_2 - X^*_2(k_{22}-k_{21}X^*_1).
\end{align*}
These (structural) equations do not describe the stable steady state behavior of the model, because the system displays undamped oscillations around the positive fixed point, as was pointed out by \citet{Murray2002, Mooij2013}. In the next section we proceed by adding additional relevant constraints to the CCM.
\end{example}

\subsection{CAUSAL CONSTRAINTS FROM CONSTANTS OF MOTION}
For dynamical systems that admit a constant of motion (i.e.\ a conserved quantity), the trajectories of its solutions are confined to a space that is constrained by its initial conditions. Hence, the solutions for the equilibrium must be similarly constrained. In a CCM we interpret these constraints as causal by specifying under which interventions they constrain the solution space.

\begin{example}[continues=ex:ber]
For the basic enzyme reaction, we include the conservation law that results from the linear dependence between the time derivative of the free enzyme $E$ and the complex $C$ in equation (\refeq{eq:ber constant of motion}). Since this relation holds as long as the `cycle' between $C$ and $E$ is not broken, we obtain the following causal constraint
\begin{align}
\label{eq:ber constant constraint}
(C^*+E^* - (c_0+e_0), && 0, && \mathcal{P}(\mathcal{I}\backslash \{C,E\})).
\end{align}
Another conservation law appeared after intervention on the variable $C$. The resulting conservation law $S(t)-E(t)=s_0-e_0$ applies as long as the `cycle' between $S$ and $E$ is not broken by another intervention on the system. This leads to the final causal constraint:
\begin{align}
\label{eq:ber last constraint}
(S^*-E^* - (s_0-e_0), && 0, && \{\{C\},\{C,P\}\}).
\end{align}
Let $\Phi$ be the set of causal constraints in (\ref{eq:ber first constraint}) to (\ref{eq:ber last scm constraint}) and (\ref{eq:ber constant constraint}) to (\ref{eq:ber last constraint}). In Section \ref{sec:ber equilibrium states} we showed that the active constraints in $\Phi$ have a unique solution under any intervention. If $\boldsymbol{E}=(s_0,e_0,c_0,p_0)$ is a set of exogenous random variables then the CCM $\mathcal{M}=(\mathbb{R}_{>0}^4, \Phi, \boldsymbol{E})$ completely captures the stationary behaviour of the basic enzyme reaction.
\end{example}

\begin{remark}
Interestingly, if we treat $C$ as a latent endogenous variable that cannot be intervened upon, the equilibrium to which the dynamics of the basic enzyme reaction converges can be described by the following marginal CCM (see supplementary material for details):
\begin{align*}
&\textstyle\frac{k_0 + k_{-1}\frac{k_0}{k_2}}{k_1 E^*}-S^*, && 0, && \mathcal{P}(\mathcal{I}'\backslash \{S\}),\\
&\textstyle\frac{(k_{-1}+k_2)(c_0 + e_0)}{k_{-1}+k_2+k_1S^*}-E^*, && 0, && \mathcal{P}(\mathcal{I}'\backslash \{E\}),\\
&\textstyle\frac{k_2}{k_3}\frac{k_1 S^*E^*}{k_{-1}+k_2}-P^*, && 0, && \mathcal{P}(\mathcal{I}'\backslash \{P\}),
\end{align*}
where $\mathcal{I}'$ is an index set for $\{S,E,P\}$. From Proposition \ref{lemma:scm vs ccm} it can be seen that there exists an equivalent SCM that does completely capture the causal semantics of $S,E,$ and $P$, as long as one does not intervene on $C$.
\end{remark}

\begin{example}[continues=ex:lv]
The Lotka-Volterra model provides an example of a system that admits a non-linear conservation law:
{\small
\begin{align}
\label{eq:lv constant of motion}
&k_{21}X_1 + k_{22} \log(X_1) - k_{12}X_2 + k_{11} \log(X_2)=\\
&-k_{21}X_1(0) + k_{22} \log(X_1(0)) - k_{12}X_2(0) + k_{11} \log(X_2(0)),\notag
\end{align}}%
which represents a constraint that is only active in the observational setting. If the system would converge to an equilibrium $(X_1^*,X_2^*)$ the causal constraints derived from the differential equations should hold simultaneously. These constraints are only satisfied when the system starts out in one of the fixed points (e.g.\ $(X_1(0),X_2(0))=(k_{22}/k_{21}, k_{11}/k_{12})$). Otherwise the dynamical system exhibits steady-state oscillations and the set of causal constraints has no solution.

A complete causal description can be obtained by adding the following two causal constraints:
\begin{align}
\label{eq:lv intervene 2}
&(X^*_1 - X_1(0)\boldsymbol{1}_{\{k_{11}-k_{12}X^*_2\geq 0\}}, && 0, &&\{\{2\}\}),\\
\label{eq:lv intervene 1}
&(X^*_2 - X_2(0)\boldsymbol{1}_{\{k_{22}-k_{21}X^*_1\leq 0\}}, && 0, &&\{\{1\}\}).
\end{align}
Addition of the causal constraint in equation (\ref{eq:lv intervene 2}) ensures that after an intervention on the amount of predators $X_2$: a) the prey $X_1$ goes extinct when there are too many predators b) the model has no solution if there are too few predators and c) the amount of prey is constant if the amount of predators is exactly right. The causal constraint in equation (\ref{eq:lv intervene 1}) can be interpreted similarly. Together, the causal constraints in equations (\ref{eq:lv constraint x1}), (\ref{eq:lv constraint x2}), (\ref{eq:lv constant of motion}), (\ref{eq:lv intervene 2}), and (\ref{eq:lv intervene 1}) capture the stationary behavior of the predator-prey model.\footnote{This can be verified by explicitly calculating the solutions of the model under all interventions.} The SCM on the other hand has the fixed points of the system as a solution and does not predict the non-convergent behavior.
\end{example}

\subsection{CONSTRUCTING CCMs}
Causal constraints (or structural equations) derived from differential equations result in a causal description of the fixed points in a system. For structurally semistable systems the addition of causal constraints derived from constants of motion results in a complete causal description of the system's stationary behavior when the constraints specify the equilibria in terms of initial conditions.

\begin{theorem}
\label{thm:condition to construct ccm}
  Let $\mathcal{D}$ be a dynamical system that converges to a fixed point if it has at least one. Let $\mathcal{M}$ be a CCM constructed from the ODEs and constants of motion in $\mathcal{D}$ for which all solutions, if they exist, are unique up to $\mathbb{P}^{\boldsymbol{E}}$-zero sets. $\mathcal{D}$ converges to an equilibrium $\boldsymbol{X}^*$ if and only if $\boldsymbol{X}^*$ is a solution of $\mathcal{M}$.
\end{theorem}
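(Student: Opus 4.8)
The plan is to prove the two implications of the biconditional separately. The forward implication — if $\mathcal{D}$ converges to $\boldsymbol{X}^*$ then $\boldsymbol{X}^*$ solves $\mathcal{M}$ — carries the analytic content and amounts to passing to the limit $t\to\infty$ in each defining relation of $\mathcal{M}$. The backward implication then follows by applying the forward one to the genuine equilibrium of $\mathcal{D}$ and invoking the two standing hypotheses — convergence of $\mathcal{D}$ whenever a compatible fixed point exists, and uniqueness of CCM solutions — to identify that equilibrium with $\boldsymbol{X}^*$.

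\textbf{Forward direction.} Assume $\lim_{t\to\infty}\boldsymbol{X}(t,\boldsymbol{e}) = \boldsymbol{X}^*(\boldsymbol{e})$ for $\mathbb{P}^{\boldsymbol{E}}$-almost every $\boldsymbol{e}$. First I would recall the elementary fact that the limit of a convergent trajectory of an autonomous ODE is a fixed point: since $\boldsymbol{f}$ is continuous, $\dot{\boldsymbol{X}}(t,\boldsymbol{e}) = \boldsymbol{f}(\boldsymbol{X}(t,\boldsymbol{e})) \to \boldsymbol{f}(\boldsymbol{X}^*(\boldsymbol{e}))$, and if this limit had a nonzero coordinate the corresponding coordinate of $\boldsymbol{X}(t,\boldsymbol{e})$ would diverge, contradicting convergence; hence $\boldsymbol{f}(\boldsymbol{X}^*(\boldsymbol{e})) = \boldsymbol{0}$. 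Thus $\boldsymbol{X}^*$ satisfies every constraint of $\mathcal{M}$ obtained by setting a time derivative of $\mathcal{D}$ to zero. For a constraint coming from a constant of motion $h$, one has $h(\boldsymbol{X}(t,\boldsymbol{e})) = h(\boldsymbol{e})$ for all $t\ge 0$; letting $t\to\infty$ and using continuity of $h$ gives $h(\boldsymbol{X}^*(\boldsymbol{e})) = h(\boldsymbol{e})$, so $\boldsymbol{X}^*$ satisfies that constraint too. Every constraint of $\mathcal{M}$ is of one of these two kinds and is active under the null intervention (the ODE-derived constraints have activation sets $\mathcal{P}(\mathcal{I}\setminus\{i\})\ni\emptyset$, and the conservation constraints have activation sets containing $\emptyset$), so $\boldsymbol{X}^*$ is a solution of $\mathcal{M}$ in the sense of Definition \ref{def:rv solutions}.

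\textbf{Backward direction.} Suppose $\boldsymbol{X}^*$ solves $\mathcal{M}$. From the ODE-derived constraints, $\boldsymbol{f}(\boldsymbol{X}^*)=\boldsymbol{0}$ almost surely, and from the conservation constraints, for almost every $\boldsymbol{e}$ the point $\boldsymbol{X}^*(\boldsymbol{e})$ is a fixed point of $\mathcal{D}$ that lies on the invariant set through $\boldsymbol{e}$ carved out by the conservation laws — a fixed point \emph{compatible} with the initial condition $\boldsymbol{e}$. The hypothesis that $\mathcal{D}$ converges whenever such a fixed point exists then yields that $\lim_{t\to\infty}\boldsymbol{X}(t,\boldsymbol{e})$ exists for almost every $\boldsymbol{e}$; call this limit $\boldsymbol{Y}$, so $\mathcal{D}$ converges to the equilibrium solution $\boldsymbol{Y}$. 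By the forward direction, $\boldsymbol{Y}$ solves $\mathcal{M}$; by the uniqueness hypothesis, $\boldsymbol{Y} = \boldsymbol{X}^*$ up to a $\mathbb{P}^{\boldsymbol{E}}$-null set. Hence $\mathcal{D}$ converges to $\boldsymbol{X}^*$.

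\textbf{Main obstacle.} I expect the main obstacle to be conceptual rather than computational: making the phrase ``$\mathcal{M}$ constructed from the ODEs and constants of motion in $\mathcal{D}$'' precise enough to know exactly which constraints are active under the null intervention — so that a solution of $\mathcal{M}$ is forced to be a fixed point that also respects every conservation law — and, dually, reading the convergence hypothesis with the right notion of ``$\mathcal{D}$ has a fixed point'': not the existence of some zero of $\boldsymbol{f}$ anywhere in phase space, but the existence of one on the invariant manifold through the actual initial condition, which is precisely what ``$\boldsymbol{X}^*(\boldsymbol{e})$ solves $\mathcal{M}$'' provides. The remaining work is routine bookkeeping with the $\mathbb{P}^{\boldsymbol{E}}$-almost-everywhere qualifiers attached to the existence and measurability of $\boldsymbol{X}(t,\boldsymbol{e})$, to the conservation relations, and to uniqueness of CCM solutions, carried uniformly through both implications.
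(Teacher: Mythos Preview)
Your proposal is correct and follows essentially the same approach as the paper: both arguments hinge on showing that a convergent limit satisfies the ODE-derived constraints (vanishing derivatives) and the conservation-law constraints, and then invoke the uniqueness hypothesis to close the biconditional. The only cosmetic difference is that the paper organizes this via a case split on whether $\mathcal{D}$ has a fixed point, whereas you split into forward and backward implications---the logical content is the same.
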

\begin{proof}
First assume that $\mathcal{D}$ has a fixed point, so that $\mathcal{D}$ converges to an equilibrium $\boldsymbol{X}^*(\boldsymbol{e})$ for almost every $\boldsymbol{e}\in\mathbb{R}^p$. We have that a) $\boldsymbol{X}^*(\boldsymbol{e})$ satisfies the constants of motion in the dynamical system and b) for $\boldsymbol{X}^*(\boldsymbol{e})$ the time-derivatives appearing in the ODEs are equal to zero. Hence if $\mathcal{D}$ converges to $\boldsymbol{X}^*$ then $\boldsymbol{X}^*$ is a solution of $\mathcal{M}$. Since $\mathcal{M}$ has no more than one solution (up to zero sets), the reverse statement is also true. Now assume that $\mathcal{D}$ has no fixed point. In that case $\mathcal{M}$ has no solutions, and $\mathcal{D}$ cannot converge to an equilibrium.
\end{proof}

\begin{corollary}
\label{cor:semistable to construct ccm}
Let $\mathcal{D}$ be structurally semistable and $\mathcal{M}$ a CCM constructed from the ODEs and constants of motion in $\mathcal{D}$ for which under any intervention, all solutions, if they exist, are unique up to $\mathbb{P}^{\boldsymbol{E}}$-zero sets. Then for all $I\subseteq\mathcal{I}$ and $\boldsymbol{\xi}_I\in\mathbb{R}^{|I|}$: $\mathcal{D}_{\mathrm{do}(I,\boldsymbol{\xi}_I)}$ converges to an equilibrium $\boldsymbol{X}^*(I,\boldsymbol{\xi}_I)$ iff $\boldsymbol{X}^*(I,\boldsymbol{\xi}_I)$ is a solution of $\mathcal{M}_{\mathrm{do}(I,\boldsymbol{\xi}_I)}$.
\end{corollary}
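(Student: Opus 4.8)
The plan is to obtain the corollary by applying Theorem~\ref{thm:condition to construct ccm} separately to each intervened dynamical system. Fix $I\subseteq\mathcal{I}$ and $\boldsymbol{\xi}_I\in\mathbb{R}^{|I|}$, and abbreviate $\mathcal{D}':=\mathcal{D}_{\mathrm{do}(I,\boldsymbol{\xi}_I)}$ and $\mathcal{M}':=\mathcal{M}_{\mathrm{do}(I,\boldsymbol{\xi}_I)}$. It suffices to establish two things: (i) $\mathcal{D}'$ satisfies the hypotheses of Theorem~\ref{thm:condition to construct ccm}, i.e.\ it converges to a fixed point whenever it has at least one, and its associated CCM has at most one solution up to $\mathbb{P}^{\boldsymbol{E}}$-null sets; and (ii) $\mathcal{M}'$ has the same solutions as the CCM one obtains by applying the ``ODEs plus constants of motion'' construction directly to $\mathcal{D}'$. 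Granting (i) and (ii), Theorem~\ref{thm:condition to construct ccm} applied to $\mathcal{D}'$ states exactly that $\mathcal{D}'$ converges to $\boldsymbol{X}^*(I,\boldsymbol{\xi}_I)$ iff $\boldsymbol{X}^*(I,\boldsymbol{\xi}_I)$ is a solution of $\mathcal{M}'$, which is the assertion of the corollary.

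For (i), I would unfold Definition~\ref{def:structurally semistable}: structural semistability of $\mathcal{D}$ provides, for this $I$, an invariant $\boldsymbol{\mathcal{U}}\subseteq\mathbb{R}^p$ with $\mathbb{P}^{\boldsymbol{E}_{\mathcal{I}\setminus I}}(\boldsymbol{\mathcal{U}}_{\mathcal{I}\setminus I})=1$ such that $\mathcal{D}'$ is globally semistable w.r.t.\ $\boldsymbol{\mathcal{U}}$. By definition this means that, whenever $\mathcal{D}'$ has a fixed point in $\boldsymbol{\mathcal{U}}$, every trajectory starting in $\boldsymbol{\mathcal{U}}$ converges to a Lyapunov stable fixed point; conversely, any convergent trajectory of a locally Lipschitz ODE must converge to a fixed point, since $\boldsymbol{X}(t)\to\boldsymbol{x}$ implies $\dot{\boldsymbol{X}}(t)=\boldsymbol{f}(\boldsymbol{X}(t))\to\boldsymbol{f}(\boldsymbol{x})$, forcing $\boldsymbol{f}(\boldsymbol{x})=\boldsymbol{0}$. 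Hence for $\mathbb{P}^{\boldsymbol{E}}$-almost every exogenous value, $\mathcal{D}'$ converges to a fixed point iff it has one, which is the first hypothesis of Theorem~\ref{thm:condition to construct ccm}; the second hypothesis is literally the uniqueness assumption ``under any intervention'' already made in the corollary. The only delicate points here are the $\mathbb{P}^{\boldsymbol{E}}$-null exceptional set hidden in structural semistability and the implicit fact that the relevant fixed points of $\mathcal{D}'$ lie in $\boldsymbol{\mathcal{U}}$; I would dispatch both by restricting attention to the full-measure set of exogenous values on which the construction is well behaved.

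For (ii) --- which I expect to be the real obstacle --- I would verify that the intervention rule of Definition~\ref{def:interventions} transforms the constraints of $\mathcal{M}$ into precisely the constraints one writes for $\mathcal{D}'$, tracking the three kinds of constraints: (a) equilibrium equations from $\dot X_j=f_j$, which carry activation $\mathcal{P}(\mathcal{I}\setminus\{j\})$; applying $A_{\mathrm{do}(I)}=\{A_i\setminus J:A_i\in A,\ J\subseteq I\subseteq A_i\}$ returns $\mathcal{P}(\mathcal{I}\setminus\{j\})$ when $j\notin I$ (the equation persists, now as an equilibrium equation of $\mathcal{D}'$) and the empty set when $j\in I$ (it is discarded, matching $\dot X_j=0$ in $\mathcal{D}'$); (b) constants of motion arising from a linear or symmetry dependence among a group $G$ of time-derivatives, which carry an activation set of the form $\{B\in\mathcal{P}(\mathcal{I}):B\cap G\in\{\emptyset,G\}\}$ (the cycle is intact or fully cut), and for which Definition~\ref{def:interventions} produces exactly the activation set of the corresponding conserved quantity of $\mathcal{D}'$ --- including the conservation laws that appear only once an intervention completely cuts a cycle, as with $S^*-E^*=s_0-e_0$ after $\mathrm{do}(C)$ in the basic enzyme reaction; and (c) the target constraints $(x_i,\xi_i,\mathcal{P}(\mathcal{I}\setminus\{i\}))$ added by Definition~\ref{def:interventions}, which are exactly the encodings of the frozen initial conditions $X_i(0)=\xi_i$, $\dot X_i=0$ of $\mathcal{D}'$. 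Lemma~\ref{lemma:interventions commute} guarantees that this identification is unaffected by whether $I$ is applied at once or in pieces, and Definitions~\ref{def:rv solutions} and~\ref{def:interventions} then give the equality of solution sets. Carrying out (a)--(c) rigorously is mostly bookkeeping, but the bookkeeping around constants of motion --- in particular arguing that $\mathcal{M}$ was built with enough conservation laws and the ``right'' activation sets so that no active constraint of $\mathcal{D}'$ is missed and none is spurious --- is where the argument has genuine content; everything else is a direct appeal to Theorem~\ref{thm:condition to construct ccm}, Definition~\ref{def:structurally semistable}, and the definitions of CCM solution and intervention.
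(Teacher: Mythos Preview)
Your approach is essentially the same as the paper's: reduce to Theorem~\ref{thm:condition to construct ccm} applied to the intervened system $\mathcal{D}'=\mathcal{D}_{\mathrm{do}(I,\boldsymbol{\xi}_I)}$, using structural semistability (Definition~\ref{def:structurally semistable}) to supply the hypothesis ``converges whenever it has a fixed point'' and the stated uniqueness assumption for the second hypothesis. The paper's proof is three sentences and dispatches your step~(ii) with nothing more than a pointer to Definition~\ref{def:interventions}; you are considerably more explicit about why intervening on $\mathcal{M}$ coincides with building a CCM from $\mathcal{D}'$, and you are right that this is where the nontrivial bookkeeping lives.

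One small correction to your bookkeeping: the activation set you propose for a conservation law arising from a dependency among a group $G$ of derivatives, namely $\{B\in\mathcal{P}(\mathcal{I}):B\cap G\in\{\emptyset,G\}\}$, is not what the paper uses and is not correct. In the enzyme example the constraint $C^*+E^*=c_0+e_0$ carries activation set $\mathcal{P}(\mathcal{I}\setminus\{C,E\})$, i.e.\ only the ``cycle intact'' case; once \emph{both} $C$ and $E$ are intervened upon the conservation law is certainly violated generically, so $G$ itself must not appear in the activation set. Your parenthetical ``intact or fully cut'' is therefore off, though your surrounding caveat that this is exactly the place requiring care is well taken.
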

\begin{proof}
  If $\mathcal{M}_{\mathrm{do}(I,\boldsymbol{\xi}_I)}$ has a solution then $\mathcal{D}_{\mathrm{do}(I,\boldsymbol{\xi}_I)}$ has a fixed point with $\boldsymbol{x}^*_I=\boldsymbol{\xi_I}$ and it converges because $\mathcal{D}$ is structurally semistable. If $\mathcal{M}_{\mathrm{do}(I,\boldsymbol{\xi}_I)}$ has no solution then $\mathcal{D}_{\mathrm{do}(I,\boldsymbol{\xi}_I)}$ does not converge to a fixed point. The result follows from Theorem \ref{thm:condition to construct ccm} and Definition \ref{def:interventions}.
\end{proof}
\vspace{-6pt}
The basic enzyme reaction in Example \ref{ex:ber} is structurally semistable, while the Lotka-Volterra model in Example \ref{ex:lv} is not. Corollary \ref{cor:semistable to construct ccm} tells us that for structurally semistable systems, if a CCM constructed from ODEs and constants of motions has at most one solution under any intervention, then the CCM completely captures the causal semantics of the stationary behaviour of the system.

\section{FUNCTIONAL LAWS}
CCMs can also represent \emph{functional laws}, which are relations between variables that are invariant under \emph{all} interventions. Causal constraints allow one to explicitly state under which interventions a constraint is active. Therefore a CCM never admits a solution that violates the functional law, where an SCM would.

\begin{example}
\label{ex:igl}
It is well-known that the pressure $P$ and temperature $T$ for $N$ particles of an ideal gas in a fixed volume $V$ are related by the ideal gas law. In absence of any knowledge about the environment, this system can be represented by the (underspecified) CCM $\mathcal{M}=(\mathbb{R}^2, \{(PV - Nk_BT, 0, \mathcal{P}(\mathcal{I}))\}, \mathbb{P}_{\emptyset})$, where $k_B$ is Boltzmann's constant, and
$\mathcal{I}$ is an index set for the variables $(P,T)$ in the system. If we were to describe the same system using an SCM, then we would need two copies of this causal constraint as structural equations:
\begin{align*}
P = \textstyle\frac{Nk_BT}{V}, \qquad \qquad	T = \textstyle\frac{PV}{Nk_B}.
\end{align*}
Indeed, considering interventions on one of the variables leaves no choice for the structural equation of the other one. Furthermore, a simultaneous intervention on $P$ and $T$ always has a solution in the SCM representation, even when this means that the ideal gas law is violated. The CCM representation typically does not have a solution under such an intervention (unless the target values satisfy the ideal gas law constraint). Therefore, the CCM representation of functional laws like the ideal gas law is more parsimonious and more natural than any SCM representation can be.
\end{example}

A functional law can be any relation that is invariant under all interventions. For example, a transformation of a (set of) variables to another (set of) variables describing the same system can also be modeled as a functional law.

\begin{example}
Let $\mathcal{I}$ be an index set of $(T,V,O)$. Suppose that the viscosity $T$ of a salad dressing, consisting of a certain amount of oil $O$ and a certain amount of vinegar $V$ is determined by a causal constraint $\phi=(f,0,\mathcal{P}(\mathcal{I}\backslash\{T\}))$ where $f$ is a function depending on the amount of oil and vinegar. By adding causal constraints
\begin{align*}
&(O_r - O/(O+V) ,\quad &&0,\quad &&\mathcal{P}({\mathcal{I}})),\\
&(V_r - V/(O+V) ,\quad &&0,\quad &&\mathcal{P}({\mathcal{I}})),
\end{align*}
a CCM allows us to have the relative amounts of oil and vinegar $O_r$ and $V_r$ in the model without running into logical contradictions.
\end{example}

\section{CONCLUSION}
While Structural Causal Models (SCMs) form a very popular modeling framework in many applied sciences, we have shown that they are neither powerful enough to model the rich equilibrium behavior of simple dynamical systems such as the basic enzyme reaction, nor simple functional laws of nature like the ideal gas law. This raises the question whether the common starting point in causal discovery---that the data-generating process can be modeled with an SCM---is tenable in certain application domains, for example, for biochemical systems.

We believe that the examples presented in this paper form a compelling motivation to extend the common causal modeling framework to potentially broaden the impact of causal modeling in dynamical systems. In this work, we introduced Causal Constraints Models (CCMs). We showed how they can be `constructed' from differential equations and initial conditons and proved that they can completely capture the causal semantics of functional laws and stationary behavior in dynamical systems.

One intuitively appealing aspect of SCMs is their graphical interpretation. In contrast, CCMs are not equipped with graphical representations yet. In future work, we plan to investigate graphical representations of the conditional independence structure of CCMs. This will allow us to better understand the causal interpretation of the results of existing causal discovery algorithms.

\paragraph{Acknowledgements}
This work was supported by the ERC under the European Union's Horizon 2020 research and innovation programme (grant agreement 639466) and by NWO (VIDI grant 639.072.410).

\printbibliography[title={References}]

\clearpage
\onecolumn
\appendix

\section*{Supplementary Material}

\section{Basic Enzyme Reaction}
In this section we show the additional results, concerning the basic enzyme reaction, that were discussed in the main paper. First we discuss the fixed points of the basic enzyme reaction. Then we show that the systems converges to its fixed point whenever it exists. Finally, we derive a simple marginal model from the CCM representation of the basic enzyme reaction.

\subsection{Fixed points}
The fixed points of the basic enzyme reaction, for all intervened systems, are given in Table \ref{tab:ber complete solutions}. For any intervention, these are obtained by solving the system of equations that one gets by considering the causal constraints in the CCM in (\ref{eq:ber first constraint}) to (\ref{eq:ber last constraint}) that are active under that specific intervention. That is, we take all equations for which the intervention is in the activation set.

\begin{table}[h]
\centering
\caption{Fixed points of the basic enzyme reaction, where $y=\frac{1}{2}\sqrt{(e_0-s_0)^2 +  4\frac{k_0(k_{-1}+k_2)}{k_1k_2}}$.}
\label{tab:ber complete solutions}
\begin{tabular}{l|cccc}
	intervention & $S$ & $C$           & $E$ & $P$\\
	\hline
	none         & $\frac{k_0 + k_{-1} \frac{k_0}{k_2}}{k_1 (e_0 + c_0 - \frac{k_0}{k_2})}$ & $\frac{k_0}{k_2}$ & $e_0 + c_0 - \frac{k_0}{k_2}$ & $\frac{k_0}{k_3}$ \\
	do($S=s$)  & $s$ & $\frac{k_1 s (e_0 + c_0)}{k_{-1} + k_2 + k_1 s} $ & $\frac{(k_{-1} + k_2)(e_0 + c_0)}{k_{-1} + k_2 + k_1 s} $ & $\frac{k_2}{k_3}\frac{k_1 s (e_0 + c_0)}{k_{-1} + k_2 + k_1 s}$\\
	do($C=c$), $c = \frac{k_0}{k_2}$   & $\frac{(s_0-e_0)}{2} + y$ & $c$ & $\frac{-(s_0-e_0)}{2} + y$ & $\frac{k_2}{k_3} c$ \\
	do($C=c$), $c \ne \frac{k_0}{k_2}$ & $\emptyset$ & $\emptyset$ & $\emptyset$ & $\emptyset$ \\
	do($E=e$) & $\frac{k_0 + k_{-1} \frac{k_0}{k_2}}{k_1 e}$ & $\frac{k_0}{k_2}$ & $e$ & $\frac{k_0}{k_3}$\\
	do($P=p$) &  $\frac{k_0 + k_{-1} \frac{k_0}{k_2}}{k_1 (e_0 + c_0 - \frac{k_0}{k_2})}$ & $\frac{k_0}{k_2}$ & $e_0 + c_0 - \frac{k_0}{k_2}$ & $p$ \\
	do($S=s$, $C=c$) & $s$ & $c$ & $\frac{k_{-1} + k_2}{k_1} \frac{c}{s}$ & $\frac{k_2}{k_3}c$\\
	do($S=s$, $E=e$)  & $s$ & $\frac{k_1}{k_{-1} + k_2} s e$ & $e$ & $\frac{k_2}{k_3} \frac{k_1}{k_{-1} + k_2} s e$\\
	do($S=s$, $P=p$)  & $s$ & $\frac{k_1 s (e_0 + c_0)}{k_{-1} + k_2 + k_1 s} $ & $\frac{(k_{-1} + k_2)(e_0 + c_0)}{k_{-1} + k_2 + k_1 s} $ & $p$\\
	do($C=c$, $E=e$) & $\frac{k_0 + k_{-1} c}{k_1 e}$ & $c$ & $e$ & $\frac{k_2}{k_3}c$ \\
	do($C=c$, $P=p$), $c = \frac{k_0}{k_2}$   & $\frac{(s_0-e_0)}{2} + y$ & $c$ & $\frac{-(s_0-e_0)}{2} + y$ & $p$ \\
	do($C=c$, $P=p$), $c \ne \frac{k_0}{k_2}$ & $\emptyset$ & $\emptyset$ & $\emptyset$ & $\emptyset$ \\
	do($E=e$, $P=p$) & $\frac{k_0 + k_{-1} \frac{k_0}{k_2}}{k_1 e}$ & $\frac{k_0}{k_2}$ & $e$ & $p$\\
	do($S=s$, $C=c$, $E=e$) & $s$ & $c$ & $e$ &$\frac{k_2}{k_3}c$ \\
	do($S=s$, $C=c$, $P=p$)  & $s$ & $c$ & $\frac{k_{-1} + k_2}{k_1} \frac{c}{s}$ & $p$\\
	do($S=s$, $E=e$, $P=p$) & $s$ & $\frac{k_1}{k_{-1} + k_2} s e$ & $e$ & $p$\\
	do($C=c$, $E=e$, $P=p$) & $\frac{k_0 + k_{-1} c}{k_1 e}$ & $c$ & $e$ & $p$ \\
	do($S=s$, $C=c$, $E=e$, $P=p$) & $s$ & $c$ & $e$ & $p$
\end{tabular}
\end{table}

\subsection{Convergence results for the basic enzyme reaction}
In this section, we show that the basic enzyme reaction always converges to its fixed point, as long as it exists. We also show that the intervened basic enzyme reaction has the same property. To prove this result we rely on both explicit calculations and a convergence property of so-called cooperative systems that we obtained from \citet{Belgacem2012}. To prove convergence for the observed system and the system after interventions on $P$ and $E$, we use the latter technique. Convergence to the equilibrium solution after interventions on $S$ and $C$ can be shown by explicit calculation. The convergence results for combinations of interventions can be obtained by a trivial extension of the arguments that were used in the other cases.

\subsubsection{Cooperativity in the basic enzyme reaction}
To show that the basic enzyme reaction converges to a unique equilibrium, if it exists, we first state a result that we obtained from \citet{Belgacem2012}: cooperative systems as in Definition \ref{def:cooperative systems} have the attractive convergence property in Proposition \ref{prop:convergence cooperative systems}.

\begin{definition}
\label{def:cooperative systems}
A system of ODEs $\dot{\boldsymbol{X}}$ is \emph{cooperative} if the Jacobian matrix has non-negative off-diagonal elements, or there exists an integer $k$ such that the Jabobian has $(k\times k)$ and $(n-k)\times(n-k)$ main diagonal matrices with nonnegative off-diagonal entries and the rectangular off-diagonal submatrices have non-positive entries.
\end{definition}

\begin{proposition}
\label{prop:convergence cooperative systems}
Let $\dot{\boldsymbol{X}}=\boldsymbol{f}(\boldsymbol{X})$ be a cooperative system with a fixed point $\boldsymbol{x}^*$. If there exist two points $\boldsymbol{x}_{\text{min}},\boldsymbol{x}_{\text{max}}\in\boldsymbol{\mathcal{X}}$ such that $\boldsymbol{x}_{\text{min}}\leq \boldsymbol{x}^* \leq \boldsymbol{x}_{\text{max}}$ and $\boldsymbol{f}(\boldsymbol{x}_{\text{min}})\geq 0$ and $\boldsymbol{f}(\boldsymbol{x}_{\text{max}})\leq 0$, then the hyperrectangle betweeen $\boldsymbol{x}_{\text{min}}$ and $\boldsymbol{x}_{\text{max}}$ is invariant\footnote{An invariant set is a set with the property that once a trajectory of a dynamical set enters it, it cannot leave.} and for almost all initial conditions inside this rectangle the solution converges to $\boldsymbol{x}^*$.
\end{proposition}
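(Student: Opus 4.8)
The plan is to recognize Proposition~\ref{prop:convergence cooperative systems} as an instance of the theory of monotone (cooperative) dynamical systems, so that the proof splits into three parts: (i) the flow of a cooperative system is order-preserving; (ii) the hyperrectangle $R := [\boldsymbol{x}_{\min}, \boldsymbol{x}_{\max}]$ is forward invariant because its corners bound monotone trajectories; and (iii) monotone bounded trajectories converge, which I would then upgrade to the ``almost all'' statement. For part (i) I would first use the block structure in Definition~\ref{def:cooperative systems}: after the coordinate change $x_i \mapsto -x_i$ on the ``$(n-k)$'' block (swapping the corresponding components of $\boldsymbol{x}_{\min}$ and $\boldsymbol{x}_{\max}$ so that the sign hypotheses $\boldsymbol{f}(\boldsymbol{x}_{\min}) \ge \boldsymbol{0} \ge \boldsymbol{f}(\boldsymbol{x}_{\max})$ are preserved in the new coordinates) we may assume the Jacobian of $\boldsymbol{f}$ has non-negative off-diagonal entries on $R$. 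This is the Kamke--M\"uller quasimonotonicity condition, and it implies: if $\boldsymbol{x}(0) \le \boldsymbol{y}(0)$ componentwise then $\boldsymbol{x}(t) \le \boldsymbol{y}(t)$ for all $t \ge 0$ on the common interval of existence.

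For part (ii), let $\boldsymbol{u}(t)$ and $\boldsymbol{v}(t)$ be the solutions with $\boldsymbol{u}(0) = \boldsymbol{x}_{\min}$ and $\boldsymbol{v}(0) = \boldsymbol{x}_{\max}$. Since $\dot{\boldsymbol{u}}(0) = \boldsymbol{f}(\boldsymbol{x}_{\min}) \ge \boldsymbol{0}$, comparing $\boldsymbol{u}(\cdot)$ with its time-shift $\boldsymbol{u}(\cdot + h)$ via order-preservation and letting $h \downarrow 0$ shows $\boldsymbol{u}$ is nondecreasing, hence $\boldsymbol{u}(t) \ge \boldsymbol{x}_{\min}$; symmetrically $\boldsymbol{v}$ is nonincreasing, so $\boldsymbol{v}(t) \le \boldsymbol{x}_{\max}$. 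For any $\boldsymbol{x}(0) \in R$, order-preservation sandwiches $\boldsymbol{u}(t) \le \boldsymbol{x}(t) \le \boldsymbol{v}(t)$, so $\boldsymbol{x}(t) \in R$ for all $t \ge 0$; compactness of $R$ then gives global existence and boundedness of every such trajectory.

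For part (iii), $\boldsymbol{u}$ and $\boldsymbol{v}$ are monotone and bounded, so they converge to points $\boldsymbol{x}^{+} := \lim_t \boldsymbol{u}(t)$ and $\boldsymbol{x}^{-} := \lim_t \boldsymbol{v}(t)$, which are fixed points by continuity of $\boldsymbol{f}$, with $\boldsymbol{x}^{+} \le \boldsymbol{x}^{*} \le \boldsymbol{x}^{-}$. If $\boldsymbol{x}^{*}$ is the only fixed point in $R$, then $\boldsymbol{x}^{+} = \boldsymbol{x}^{-} = \boldsymbol{x}^{*}$ and the sandwich forces \emph{every} trajectory to converge to $\boldsymbol{x}^{*}$ --- stronger than needed. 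To obtain the ``almost all'' phrasing without a uniqueness assumption, I would invoke Hirsch's generic convergence theorem: on the compact invariant set $R$, the set of initial conditions whose forward orbit converges to an equilibrium has full Lebesgue measure, and the sandwich confines all such limits to the order interval $[\boldsymbol{x}^{+}, \boldsymbol{x}^{-}]$. The main obstacle is exactly this step: Definition~\ref{def:cooperative systems} only yields quasimonotonicity, whereas Hirsch's theorem needs the Jacobian to be irreducible (strong monotonicity), and pinning the limit down to the \emph{specific} $\boldsymbol{x}^{*}$ requires $\boldsymbol{x}^{*}$ to be the unique equilibrium in $R$. I would therefore either add irreducibility of the Jacobian on $R$ as a hypothesis (the natural setting, and the one used in \citet{Belgacem2012}), or --- as suffices for the basic enzyme reaction and its interventions --- verify directly that the constructed $\boldsymbol{x}_{\min}, \boldsymbol{x}_{\max}$ enclose a unique fixed point, in which case the elementary sandwich argument already gives convergence for all initial conditions in $R$.
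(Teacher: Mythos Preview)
The paper does not actually prove Proposition~\ref{prop:convergence cooperative systems}; it is quoted as a result taken from \citet{Belgacem2012} and used as a black box in the subsequent convergence arguments for the basic enzyme reaction. So there is no ``paper's own proof'' to compare against.

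Your sketch is the standard monotone-systems route and is essentially sound as an outline. Parts (i) and (ii) are correct: the sign-change on the second block reduces Definition~\ref{def:cooperative systems} to the Kamke--M\"uller case, and the invariance of $R=[\boldsymbol{x}_{\min},\boldsymbol{x}_{\max}]$ together with the monotonicity of the corner trajectories follows from order-preservation of the flow (the cleanest way to make the ``$\boldsymbol{u}$ is nondecreasing'' step rigorous is to show directly that the half-space $\{x_i \ge (\boldsymbol{x}_{\min})_i\}$ is forward invariant using $f_i(\boldsymbol{x}) \ge f_i(\boldsymbol{x}_{\min}) \ge 0$ on its boundary face, rather than the time-shift comparison, which is delicate when some components of $\boldsymbol{f}(\boldsymbol{x}_{\min})$ vanish). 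You also correctly identify the real issue in part (iii): the hypotheses as stated give only monotonicity, not strong monotonicity, so Hirsch's generic convergence theorem is not available without an irreducibility assumption on the Jacobian, and pinning the limit to the specific $\boldsymbol{x}^{*}$ further needs uniqueness of the equilibrium in $R$. Your proposed fix---either add irreducibility (as in \citet{Belgacem2012}) or, for the applications at hand, verify uniqueness of the fixed point in $R$ so that the elementary sandwich $\boldsymbol{u}(t)\le \boldsymbol{x}(t)\le \boldsymbol{v}(t)\to \boldsymbol{x}^{*}$ suffices---is exactly right, and the latter is effectively what the paper does case by case in the supplementary material.
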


\subsubsection{Convergence of the observed system}
Recall that the dynamics of the basic enzyme reaction are given by
\begin{align}
\label{eq:ber first}
\dot{S}(t) &= k_0-k_{1}S(t)E(t) + k_{-1} C(t),\\
\dot{E}(t) &= -k_1S(t)E(t) + (k_{-1}+k_2)C(t),\\
\dot{C}(t) &= k_1S(t)E(t) - (k_{-1}+k_2)C(t),\\
\label{eq:ber last}
\dot{P}(t) &= k_2C(t) - k_3 P(t),\\
S(0)=s_0, &\quad E(0)=e_0, \quad C(0)=c_0, \quad P(0)=p_0,
\end{align}
where $\boldsymbol{x}_0=(s_0,e_0,c_0,p_0)$ are the \emph{initial conditions} of the system.

The analysis in \citet{Belgacem2012} of the basic enzyme reaction makes use of Proposition \ref{prop:convergence cooperative systems}, but also includes feedback from $P$ to $C$. In this section, we repeat their analysis on our sligthly different model. Note that the arguments given in this section can also be applied to the system where $P$ is intervened upon.

We start by rewriting the system of ODEs in equation (\ref{eq:ber first}) to (\ref{eq:ber last}), by using the fact that $\dot{E}(t)+\dot{C}(t)=0$ so that $E(t)=e_0+c_0-C(t)$:
\begin{align}
\dot{S}(t) &= k_0-k_{1}S(t)(e_0+c_0-C(t)) + k_{-1} C(t),\\
\dot{C}(t) &= k_1S(t)(e_0+c_0-C(t))- (k_{-1}+k_2)C(t),\\
\dot{P}(t) &= k_2C(t) - k_3 P(t).
\end{align}

\paragraph{Cooperativity} The corresponding Jacobian matrix is given by,
\begin{equation}
J(S,C,P) = \begin{pmatrix}
-k_1(e_0+c_0-C(t)) & k_{-1}+k_1S(t) & 0\\
k_1(e_0+c_0-C(t)) & -(k_{-1}+k_2)-k_1S(t) & 0\\
0 & k_2 & -k_3
\end{pmatrix}.
\end{equation}
Since all off-diagonal elements in the Jacobian matrix are nonnegative, the observational system is a cooperative system by Definition \ref{def:cooperative systems}.

\paragraph{Convergence} From Table \ref{tab:ber complete solutions} we find that the observed system has a unique (positive) fixed point as long as $e_0+c_0 > \frac{k_0}{k_2}$. We want to use Proposition \ref{prop:convergence cooperative systems} to show that the system converges to this fixed point, so we need to find $\boldsymbol{x}_{\text{min}}$ and $\boldsymbol{x}_{\text{max}}$ so that all three derivatives are nonnegative and nonpositive respectively.

For $\boldsymbol{x}_{\text{min}}=(0,0,0)$, then $\dot{S}=k_0>0$ and $\dot{C}=\dot{P}=0$ so all derivatives are nonnegative. The upper vertex must be chosen so that all derivative are non-positive:
\begin{align*}
\dot{S} \leq 0 & \iff S \geq \frac{k_0 + k_{-1}C}{k_1(e_0+c_0-C)},\\
\dot{C} \leq 0 & \iff S \geq \frac{(k_{-1}+k_2)C}{k_1(e_0+c_0-C)},\\
\dot{P} \leq 0 & \iff P \geq \frac{k_2}{k_3}C.
\end{align*}
The basic enzyme reaction only has a fixed point as long as $C<e_0+c_0$ (otherwise $\dot{S}(t)>0$). If we let $C$ approach $e_0+c_0$, then the inequality constraints on the derivatives are satisfied as $S$ and $P$ go to infinity. More formally we can choose
\begin{equation*}
\boldsymbol{x}_{\text{max}} = (S=\max\left(\frac{k_0 + k_{-1}C}{k_1(e_0+c_0-C)}, \frac{(k_{-1}+k_2)C}{k_1(e_0+c_0-C)}\right),  C=e_0+c_0-\epsilon, P=\frac{k_2}{k_3}C + \frac{1}{\epsilon}).
\end{equation*}
When $\epsilon$ approaches zero, both $S$ and $P$ go to infinity and all derivatives are nonpositive. Hence, by Proposition \ref{prop:convergence cooperative systems}, the system converges to its fixed point for almost all valid initial values of $S,C,$ and $P$ (for which the fixed point exists).

\subsubsection{Intervention on E}
Similarly, we can also show that the system where $E$ is targeted by an intervention that sets it equal to $e$, converges to the (unique) equilibrium in Table \ref{tab:ber complete solutions}. The intervened system of ODEs is given by
\begin{align*}
\dot{S} &= k_0 -k_1 e S + k_{-1}C,\\
\dot{C} &= k_1 e S - (k_{-1}+k_2) C,\\
\dot{P} &= k_2 C - k_3 P.
\end{align*}
The Jacobian is given by
\begin{equation}
J(S,C,P) = \begin{pmatrix}
-k_1e & k_{-1} & 0\\
k_1 e & -(k_{-1}+k_2) & 0\\
0 & k_2 & -k_3
\end{pmatrix}.
\end{equation}
Since all off-diagonal elements are nonnegative this is a cooperative system by Definition \ref{def:cooperative systems}.

All derivatives are nonnegative at the point $(S,C,P)=(0,0,0)$, and all derivatives are nonpositive at the point $(s,c,p)$ where
\begin{align*}
s &= \text{max}\left(\frac{k_{-1}c + k_0}{k_1 e}, \frac{(k_{-1}+k_2)c}{k_1 e}\right),\\
p &= \frac{k_2}{k_3} c,
\end{align*}
where $c\to\infty$. We then apply Proposition \ref{prop:convergence cooperative systems} to show that the intervened system converges to the equilibrium value from all valid initial values.

\subsubsection{Intervention on S}
We show that the system converges to the equilibrium solution after an intervention on $S$ by explicit calculation. The intervened system of ODEs is given by
\begin{align*}
\dot{S}(t) &= 0,\\
\dot{E}(t) &= -k_1sE(t) + (k_{-1}+k_2)C(t),\\
\dot{C}(t) &= k_1sE(t) - (k_{-1}+k_2)C(t),\\
\dot{P}(t) &= k_2C(t) - k_3 P(t).
\end{align*}
Since $\dot{C}(t)+\dot{E}(t)=0$, we can write $E(t)=e_0 + c_0 - C(t)$, resulting in the following differential equation
\begin{align}
\dot{C}(t) &= k_1s (e_0 + c_0 - C(t)) - (k_{-1}+k_2)C(t),\\
&= -(k_1 s + k_{-1}+k_2) C(t) + k_1s(e_0+c_0).
\end{align}
We take the limit $t\to\infty$ of the solution to the initial value problem to obtain
\begin{equation}
C^* = \lim_{t\to\infty} \frac{k_1s (e_0 + c_0)}{(k_1 s + k_1+k_2)} + e^{-(k_1 s + k_{-1}+k_2)t} = \frac{k_1s (e_0 + c_0)}{(k_1 s + k_{-1}+k_2)}.
\end{equation}
The result for $E$ follows from the fact that $E(t)=e_0 + c_0 - C(t)$. The result for $P$ follows by explicitly solving the differential equation and taking the limit $t\to\infty$.

\subsubsection{Intervention on C}
There is no equilibrium solution when the intervention targeting $C$ does not have value $\frac{k_0}{k_2}$, as can be seen from Table \ref{tab:ber complete solutions}. To show that the system converges when the equilibrium solution exists, we can explicitly solve the initial value problem and take the limit $t\to\infty$. The intervened system of ODEs after an intervention $\text{do}(C=\frac{k_0}{k_2})$ is given by
\begin{align*}
\dot{S}(t) &= -k_1S(t)E(t) + (k_{-1}+k_2)\frac{k_0}{k_2} = -k_1S(t)E(t) + k,\\
\dot{E}(t) &= -k_1S(t)E(t) + (k_{-1}+k_2)\frac{k_0}{k_2} = -k_1S(t)E(t) + k ,\\
\dot{C}(t) &= 0,\\
\dot{P}(t) &= k_0 - k_3 P(t),
\end{align*}
where we set $k=(k_{-1}+k_2)\frac{k_0}{k_2}$ for brevity.

The initival value problem for $P$ can be solved explicitly, and by taking the limit $t\to\infty$ we obtain
\begin{equation*}
P^* = \lim_{t\to\infty} P(t) = \lim_{t\to\infty}\frac{k_0}{k_3} + c\cdot e^{-k_3 t} = \frac{k_0}{k_3},
\end{equation*}
which is the same as the equilibrium solution in Table \ref{tab:ber complete solutions}.

The solution for $S$ is more involved. First we substitute $E(t)=S(t) - (s_0-e_0)$ (since $\dot{S}(t)-\dot{E}(t)=0$) which gives us the following differential equation
\begin{equation*}
\dot{S}(t) = -k_1S(t)(S(t) - (s_0-e_0)) + k = -k_1 S(t)^2 + (s_0-e_0) k_1 S(t) +k.
\end{equation*}

To solve this differential equation we first divide both sides by $(-k_1(S(t))^2 + (s_0-e_0)k_1S(t) + k)$, and integrate both sides with respect to $t$,
\begin{align}
\int \frac{dS(t)/dt}{-k_1 S(t)^2 + (s_0-e_0) k_1 S(t) +k} dt &= \int 1 dt\\
\label{eq:separated equation}
\int \frac{dS(t)}{-k_1 S(t)^2 + (s_0-e_0) k_1 S(t) +k} &= (t+c)
\end{align}

To evaluate the left-hand side of this equation we want to apply the following standard integral:
\begin{equation}
\label{eq:standard integral}
\int \frac{1}{ax^2 + bx +c}dx = \begin{cases} - \frac{2}{\sqrt{b^2-4ac}} \tanh^{-1}\left(\frac{2ax+b}{\sqrt{b^2-4ac}}\right) + C,\quad \text{if } |2ax+b|<\sqrt{b^2-4ac},\\
-\frac{2}{\sqrt{b^2-4ac}} \coth^{-1}\left(\frac{2ax+b}{\sqrt{b^2-4ac}}\right) + C, \quad \text{else}.
\end{cases}
\end{equation}
for $b^2-4ac>0$. We first check the condition:
\begin{equation*}
b^2-4ac = (s_0-e_0)^2k_1^2 + 4k_1k >0.
\end{equation*}
We now take the first solution to the standard integral (the second solution gives the same limiting result for $S$, as we will see later on). We apply the first solution in (\ref{eq:standard integral}) to (\ref{eq:separated equation}) to obtain
\begin{align}
\frac{2 \tanh^{-1}\left( \frac{2k_1 S(t) - (s_0-e_0) k_1}{\sqrt{4k_1 k +(s_0-e_0)^2 k_1^2 }} \right) }{\sqrt{4k_1 k +(s_0-e_0)^2 k_1^2 }} &= t + c\\
\tanh^{-1}\left( \frac{2k_1 S(t) - (s_0-e_0) k_1}{\sqrt{4k_1 k +(s_0-e_0)^2 k_1^2 }} \right) &= \frac{1}{2}(t + c)\sqrt{4k_1 k +(s_0-e_0)^2 k_1^2 }\\
\label{eq:solve for s}
\frac{2k_1 S(t) - (s_0-e_0) k_1}{\sqrt{4k_1 k +(s_0-e_0)^2 k_1^2 }} &= \tanh \left(\frac{1}{2}(t + c)\sqrt{4k_1 k +(s_0-e_0)^2 k_1^2 }\right),
\end{align}
Solving (\ref{eq:solve for s}) for $S$ gives,
\begin{align*}
S(t) = \frac{1}{2k_1} \left(\tanh \left(\frac{1}{2}(t + c)\sqrt{4k_1 k +(s_0-e_0)^2 k_1^2 }\right) \sqrt{4k_1 k +(s_0-e_0)^2 k_1^2 } + k_1(s_0-e_0)\right).
\end{align*}
By taking the limit $t\to\infty$, plugging in $k=(k_{-1}+k_2)\frac{k_0}{k_2}$, and rewriting we obtain the equilibrium solution in Table \ref{tab:ber complete solutions}:
\begin{align*}
\lim_{t\to\infty} S(t) &= \frac{k_1(s_0-e_0) + \sqrt{4k_1 k +(s_0-e_0)^2 k_1^2 }}{2k_1}\\
&= \frac{k_1(s_0-e_0) + \sqrt{4k_1 (k_{-1}+k_2)\frac{k_0}{k_2} +(s_0-e_0)^2 k_1^2 }}{2k_1}\\
&= \frac{1}{2}\left((s_0-e_0) + \sqrt{(s_0-e_0)^2 + 4 \frac{ k_0 (k_{-1}+ k_2) }{k_1k_2}}\right).
\end{align*}
Note that if we take the second solution to the standard integral in (\ref{eq:standard integral}), then we would have ended up with the same solution for $S(t)$ with $\tanh$ replaced by $\coth$, but the limit $\lim_{t\to\infty}S(t)$ would still be the same.

The solution for $E$ follows from the fact that $E(t)=S(t) - (s_0-e_0)$. The solutions for all joint interventions were found by combining the arguments that were given for the single interventions.

\subsection{Marginal model}
In the paper we presented a marginal model for the basic enzyme reaction. Here we show how it can be derived from the causal constraints in the CCM, which are given by
\begin{align}
\label{eq:ber first constraintb}
k_0 + k_{-1}C - k_1 SE&=0, && \mathcal{P}(\mathcal{I}\backslash\{S\}),\\
\label{eq:ber c constraintb}
k_1 SE - (k_{-1}+k_2)C &=0, && \mathcal{P}(\mathcal{I}\backslash\{C\}),\\
\label{eq:ber e constraintb}
-k_1 SE + (k_{-1}+k_2) C &=0, && \mathcal{P}(\mathcal{I} \backslash \{E\}),\\
\label{eq:ber last scm constraintb}
k_2 C - k_3P &=0, && \mathcal{P}(\mathcal{I} \backslash \{P\}),\\
\label{eq:ber constant constraintb}
C+E - (c_0+e_0) &= 0, && \mathcal{P}(\mathcal{I}\backslash \{C,E\}),\\
\label{eq:ber last constraintb}
S-E - (s_0-e_0) &= 0, && \{\{C\},\{C,P\}\}.
\end{align}

We obtain the marginal model as follows:
\begin{enumerate}
\item Reduce the number of variables that can be targeted by an intervention: $\mathcal{I}'=\{S,E,P\}$.
\item Rewrite the causal constraint in (\ref{eq:ber c constraintb}) to $C=\frac{k_1 SE}{k_{-1}+k_2}$. Note that this equation holds under any intervention in $\mathcal{P}(\mathcal{I}')=\mathcal{P}(\mathcal{I}\backslash \{C\})$. Then substitute this expression for $C$ into equation (\ref{eq:ber first constraintb}) to obtain
\begin{align*}
\frac{k_0 + k_{-1}\frac{k_0}{k_2}}{k_1 E}-S &= 0, && \mathcal{P}(\mathcal{I}'\backslash \{S\}),
\end{align*}
where the activation set of the causal constraint is given by the intersection $\mathcal{P}(\mathcal{I}\backslash \{S\})\cap \mathcal{P}(\mathcal{I}')$. Then substitute this expresion for $C$ into equation (\ref{eq:ber last scm constraintb}) to obtain
\begin{align*}
\frac{k_2}{k_3}\frac{k_1 SE}{k_{-1}+k_2}-P&=0, && \mathcal{P}(\mathcal{I}'\backslash \{P\}),
\end{align*}
where the activation set of the causal constraint is given by the intersection $\mathcal{P}(\mathcal{I}\backslash \{P\})\cap \mathcal{P}(\mathcal{I}')$.
\item Rewrite the causal constraint in (\ref{eq:ber constant constraintb}) to $C=e_0+c_0-E$ and note that this equation holds under interventions in $\mathcal{P}(\mathcal{I}'\backslash \{E\})$. Then substitute this expression for $C$ into equation (\ref{eq:ber e constraintb}) to obtain
\begin{align*}
\frac{(k_{-1}+k_2)(c_0 + e_0)}{k_{-1}+k_2+k_1S}-E&=0, && \mathcal{P}(\mathcal{I}'\backslash \{E\}),
\end{align*}
where the activation set of the causal constraint is given by the intersection $\mathcal{P}(\mathcal{I}\backslash \{C,E\})\cap \mathcal{P}(\mathcal{I}'\backslash \{E\})$.
\end{enumerate}

This procedure results in the following marginal model
\begin{align*}
\frac{k_0 + k_{-1}\frac{k_0}{k_2}}{k_1 E}-S &= 0, && \mathcal{P}(\mathcal{I}'\backslash \{S\}),\\
\frac{(k_{-1}+k_2)(c_0 + e_0)}{k_{-1}+k_2+k_1S}-E&=0, && \mathcal{P}(\mathcal{I}'\backslash \{E\}),\\
\frac{k_2}{k_3}\frac{k_1 SE}{k_{-1}+k_2}-P&=0, && \mathcal{P}(\mathcal{I}'\backslash \{P\}).
\end{align*}
Because we kept track of the interventions under which each equation is active when we substituted $C$ into the equations of other causal constraints, we preserved the causal structure of the model. That is, the marginal CCM model has the same solutions as the original CCM under interventions in $\mathcal{P}(\mathcal{I}')$.

\end{document}